\documentclass{article}

\usepackage{microtype}
\usepackage{graphicx}
\usepackage{subcaption}
\usepackage{booktabs} 
\usepackage{multirow}
\usepackage{booktabs}
\usepackage{comment}
\usepackage{xcolor}
\usepackage{multirow} 
\usepackage{newunicodechar}
\DeclareUnicodeCharacter{2060}{}
\usepackage{hyperref}

\usepackage[accepted]{icml2026}

\usepackage{amsmath}
\usepackage{amssymb}
\usepackage{mathtools}
\usepackage{amsthm}
\usepackage{comment}
\usepackage{pgfplots}
\usepackage{tikz}
\usetikzlibrary{arrows.meta,positioning,calc}
\usepackage{tikz}

\usepackage[capitalize,noabbrev]{cleveref}

\theoremstyle{plain}
\newtheorem{theorem}{Theorem}[section]
\newtheorem{proposition}[theorem]{Proposition}
\newtheorem{lemma}[theorem]{Lemma}

\theoremstyle{definition}
\newtheorem{definition}[theorem]{Definition}

\theoremstyle{remark}
\newtheorem{remark}[theorem]{Remark}

\usepackage[textsize=tiny]{todonotes}

\icmltitlerunning{Set-Preserving Calibration from Conformal P-Values to E-Values}

\begin{document}

\twocolumn[
  \icmltitle{⁠⁠Set-Preserving Calibration from Conformal P-Values to E-Values}



  \icmlsetsymbol{equal}{*}

  \begin{icmlauthorlist}
    \icmlauthor{Nabil Alami}{mbz}
    \icmlauthor{Jad Zakharia}{ens}
    \icmlauthor{Souhaib Ben Taieb}{mbz,mons}
    
  \end{icmlauthorlist}

  \icmlaffiliation{mbz}{Department of Statistics and Data Science, MBZUAI}
  \icmlaffiliation{ens}{École Normale Superieure de Cachan}
  \icmlaffiliation{mons}{Department of Computer Science, University of Mons}

  \icmlcorrespondingauthor{Nabil Alami}{nabil.alami@mbzuai.ac.ae}

  \icmlkeywords{Machine Learning, ICML}

  \vskip 0.3in
  ]



\printAffiliationsAndNotice{}  

\begin{abstract}


Standard conformal prediction (CP) procedures are typically formulated in terms of p-values, but reliance on p-values alone limits flexibility, for example, when combining dependent evidence across models or data splits. Recent work has explored e-value formulations for conformal inference, yet a direct connection between p- and e-value formulations in CP has been missing, especially regarding their statistical efficiency. We first identify limitations of classical p-to-e calibrators in the CP setting, showing that they are not set-preserving and can lead to overly conservative prediction sets. To address this, we propose a novel P2E calibrator that converts conformal p-values into e-values without altering the prediction set induced by the original conformal p-value.  We establish both theoretically and empirically that our calibrator can yield significant efficiency gains over existing p-to-e calibrators. This e-value formulation enables principled use of recent advances in e-value merging and randomization, where we demonstrate its impact in two applications: cross-conformal prediction (CCP), whose variants typically provide only approximate $1-2\alpha$ coverage, and conformal aggregation (CA). In both cases, our e-value-based methods satisfy the desired $1-\alpha$ coverage guarantee while improving efficiency over standard baselines. More broadly, our approach expands the flexibility of CP and opens new directions for efficient, distribution-free uncertainty quantification.


\end{abstract}






Modern machine learning models are increasingly deployed in high-stakes domains where uncertainty is as critical as accuracy. Conformal prediction (CP) has emerged as a versatile framework for constructing prediction sets in both regression and classification tasks \citep{shafer2008tutorial,angelopoulos2023conformal,papadopoulos2002inductive,vovk2005algorithmic}. Given any black-box predictor, conformal methods output a prediction set for a new test input that guarantees finite-sample marginal coverage: 
with user-specified probability (e.g., 90\%), the set contains the true outcome.  

For decades, CP has been grounded in p-values \citep{vovk2025econformal}. Classical p-value–based approaches are often efficient in practice, yielding tight prediction sets while maintaining finite-sample coverage. However, their theoretical flexibility is limited, particularly when aggregating dependent p-values \citep{vovk2022admissible}. In contrast, recent work on e-value–based CP \citep{balinsky2024,gauthier2025values,koning2025optimalconformalpredictionevalues} demonstrates how e-values extend conformal methods to settings less suited for p-values. Originally introduced to the statistical community by \citet{vovk2021values}, \textit{e-values} offer more flexible aggregation rules and additional advantages for hypothesis testing \citep{vovk2021values,vovk2024merging,wang2025only}. Nevertheless, it remains unclear whether e-value–based conformal methods suffer efficiency losses compared to their p-value counterparts, and no unifying principle seems to have been established to connect p-values and e-values within CP.  

To address this gap, we propose a novel framework that links conformal p-values and e-values through a \emph{P2E calibrator}, after exposing some of the limitations of classic p-to-e calibrators \citep{vovk2021values,evaluespoly}. We show how our P2E calibrator is \textit{set-preserving}, which ensures that the e-value formulation of CP is decision-equivalent to the standard p-value formulation: for any test point and miscoverage level, the resulting prediction set is identical. In doing so, our method can yield tight prediction sets while opening the door to the broader e-value theory, including useful extensions based on Markov-type inequalities and randomization results \citep{ramdas2023randomizedexchangeableimprovementsmarkovs}.  

Our framework thus provides a principled way to translate CP from a p-value formulation into an e-value formulation, combining the strengths of both approaches. In our view, p-values and e-values are not competitors but complementary tools that together enrich conformal prediction and its applications.  

In summary, our contributions are as follows:
\begin{enumerate}
\item We study the integration of classic p-to-e calibrators in CP while highlighting their limitations.
    \item We introduce a novel P2E calibrator which transforms the conformal p-variable into an e-variable which preserves the same CP set, and offers additional advantages over classic p-to-e calibrators. 

    \item We leverage e-value theory and Markov-type inequalities combined with our P2E calibrator framework to enhance two settings: cross-conformal prediction (CCP) and conformal aggregation (CA).

    \item We show experimentally that our e-value–based CP methods improve efficiency and flexibility compared to their p-value alternatives.  
\end{enumerate}

\section{Background and notations}

We consider a regression problem where the goal is to predict a real-valued response 
\(y \in \mathcal{Y} = \mathbb{R}\) from an input vector \(x \in \mathcal{X} \subseteq \mathbb{R}^p\). 
We are given a dataset 
\(\mathcal{D} = \{(X_i, Y_i)\}_{i \in \mathcal{I}}\), 
where the pairs \((X_i, Y_i)\) are exchangeable samples from an unknown joint distribution 
\(Q_{X,Y}\) defined over \(\mathcal{X} \times \mathcal{Y}\). We use $|A|$ to denote the Lebesgue measure of $A \subseteq \mathcal{Y}$, or its cardinality if $A$ is discrete. 

Let \(\hat{\mu}:\mathcal{X}\to\mathcal{Y}\) denote a base regressor trained on a subset of \(\mathcal{D}\). 
When the model is trained on  \(\mathcal{D}_1 \subset \mathcal{D}\), we write \(\hat{\mu}(\mathcal{D}_1)\) 
to explicitly indicate the dependence on the training data. Given a miscoverage $\alpha\in(0,1)$, CP allows to construct for any new input \(X_{n+1}\) of unknown response \(Y_{n+1}\) 
a prediction set \(\mathcal{C}(X_{n+1}) \subseteq \mathcal{Y}\) that satisfies the finite-sample coverage guarantee \citep{vovk2005algorithmic}. 

\subsection{Conformal Prediction with p-values}

Split-conformal prediction (SCP)~\citep{papadopoulos2002inductive} is an efficient variant of CP that partitions the dataset \(\mathcal{D}\) into a training set \(\mathcal{D}_{\text{train}}\) and a calibration set \(\mathcal{D}_{\text{cal}}\). Let \(\mathcal{I}\) be the index set of \(\mathcal{D}\), split as \(\mathcal{I} = \mathcal{I}_{\text{train}} \sqcup \mathcal{I}_{\text{cal}}\), where \(\mathcal{I}_{\text{cal}} := \{1,\dots,n\}\).

The base model is first trained on \(\mathcal{D}_{\text{train}}\) to obtain a predictor \(\hat{\mu}\). Then we define a nonconformity score function \(s_{\hat{\mu}}: \mathcal{X} \times \mathcal{Y} \to \mathbb{R}\)  (e.g. \(s_{\hat{\mu}}(x,y) = |y - \hat{\mu}(x)|\) in regression) and compute the nonconformity scores on the calibration set $\mathcal{S}_{\text{cal}} = \{\, s_i := s_{\hat{\mu}(\mathcal{D}_{\text{train}})}(X_i, Y_i) : i =1,\dots,n\}$.

The SCP prediction set is
\begin{equation}
\label{split_CPset}
\mathcal{C}(X_{n+1})
= \bigl\{\, y \in \mathcal{Y} : s_{\hat{\mu}(\mathcal{D}_{\text{train}})}(X_{n+1}, y) \le s_{(k)} \,\bigr\},
\end{equation}
where \(s_{(\ell)}\) denotes the \(\ell\)-th order statistic of $\mathcal{S}_{\text{cal}}$ and \(k = \lceil (1-\alpha)(n+1) \rceil\)\footnote{We assume \(\alpha(n+1)\ge 1\),  so that $s_{(k)}$ is finite. If \(\alpha(n+1)<1\), the standard convention sets the quantile to \(+\infty\), yielding the trivial prediction set $\mathcal{C}(X_{n+1})=\mathbb 
R$ \citep{angelopoulos2023conformal}.}.

Under the assumption that \(\mathcal{D}_{\text{cal}} \cup (X_{n+1}, Y_{n+1})\) are exchangeable, the prediction set~\eqref{split_CPset} satisfies
\begin{equation}
\label{valid_coverage}
\mathbb{P}\big( Y_{n+1} \in \mathcal{C}(X_{n+1}) \big) \;\ge\; 1-\alpha.
\end{equation}

The definition of the prediction set in~\eqref{split_CPset} can be equivalently expressed in terms of 
p-values. Instead of comparing the test score directly to the empirical quantile $s_{(k)}$, 
one can compute the \textit{conformal p-value} for any candidate  $y \in \mathcal{Y}$ as
\begin{equation}
\label{pval_CP}
    P_{n}(y) = \frac{1 + \sum_{i=1}^n \mathbf{1}\!\big(s_i \geq s_{\hat{\mu}(\mathcal{D}_{\text{train}})}(X_{n+1}, y)\big)}{n+1},
\end{equation}

The corresponding prediction set is
\begin{equation}
\label{pval_CP_set}
    \mathcal{C}^{\textbf{pv}}(X_{n+1}) = \{\, y \in \mathcal{Y} : P_{n}(y) > \alpha \,\}.
\end{equation}
In  what follows, we assume the scores have a continuous joint distribution, avoiding ties (which is natural in the context of regression). Then, under exchangeability $P_n$ is uniformly distributed on 
$\{1/(n+1), \dots, 1\}$ under the null hypothesis 
\(\mathbb{H}_0: Y_{n+1}=y\)~\citep{vovk2005algorithmic,lei2018distribution}. 
Hence $P_n$ is a valid p-variable, i.e.,
\[
\mathbb{P}_{\mathbb{H}_0}\big(P_n(y) \le \delta \big) \le \delta, 
\quad \forall\, \delta \in (0,1).
\]
This formulation shows that the CP set consists exactly of the candidates $y$ 
for which the conformal p-value exceeds~$\alpha$. We will refer to this p-value formulation as p-CP.

\subsection{Conformal Prediction with e-values}
\label{subsec:e-CP}
Similarly to p-variables, \emph{e-variables} can be used to test the null hypothesis $\mathbb{H}_0$. E-variables were introduced by \citet{vovk2021values} and offer several advantages over p-variables \citep{vovk2021values,vovk2024merging,wang2025only}. The realizations of e-variables are referred to as \emph{e-values}.

\begin{definition}\citep{vovk2021values,evaluespoly}
    An \emph{e-variable} $E$ for a null hypothesis $\mathbb{H}$ is a random variable taking values in \([0,+\infty)\) satisfying $\mathbb{E}_{\mathbb{H}}[E]\leq 1$. It is said to be \textit{exact} if $\mathbb{E}_{\mathbb{H}}[E]= 1$. 
\end{definition}
Given an e-variable \(E\) with realized e-value \(e\), a level-\(\alpha\) test rejects $\mathbb{H}_0$ whenever \(e \geq 1/\alpha\). By Markov’s inequality (MI), we obtain the following Proposition. 

\begin{proposition}
\label{Prop:markov_coverage}
Let $E$ be an e-variable and $\alpha\in(0,1).$ Then $\mathbb{P}(E < 1/\alpha)  \geq 1 - \frac{\mathbb{E}_{\mathbb{H}}[E]}{1/\alpha} \ge 1 - \alpha$.
\end{proposition}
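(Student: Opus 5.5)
The plan is to pass to the complementary event and apply Markov's inequality directly. Throughout, probabilities and expectations are taken under $\mathbb{H}$. This is what makes the middle expression $1 - \mathbb{E}_{\mathbb{H}}[E]/(1/\alpha)$ meaningful and lets us use $\mathbb{E}_{\mathbb{H}}[E]\le 1$.

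First, write
\[
\mathbb{P}(E < 1/\alpha) = 1 - \mathbb{P}(E \ge 1/\alpha).
\]
Since $E$ takes values in $[0,+\infty)$ by the definition of an e-variable and $1/\alpha > 0$, Markov's inequality applies:
\[
\mathbb{P}(E \ge 1/\alpha) \le \frac{\mathbb{E}_{\mathbb{H}}[E]}{1/\alpha} = \alpha\,\mathbb{E}_{\mathbb{H}}[E].
\]
For completeness, Markov's inequality follows from the pointwise bound $(1/\alpha)\mathbf{1}(E\ge 1/\alpha) \le E$, taking expectations. Substituting this into the complement gives the first inequality of the statement.

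Second, the e-variable property $\mathbb{E}_{\mathbb{H}}[E]\le 1$ gives
\[
1 - \alpha\,\mathbb{E}_{\mathbb{H}}[E] \ge 1-\alpha,
\]
which is the second inequality.

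There is no genuine obstacle. The only points needing care are nonnegativity of $E$, which Markov's inequality requires and which holds by definition, and keeping track of the fact that the probability is under the null. In the conformal application, this corresponds to evaluating $E$ at $y = Y_{n+1}$, where the resulting inequality reads as the coverage guarantee $\mathbb{P}(Y_{n+1}\in\{y : E(y) < 1/\alpha\}) \ge 1-\alpha$.
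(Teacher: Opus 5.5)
Your proof is correct and follows the same route as the paper, which derives the proposition directly from Markov's inequality applied to $\mathbb{P}(E\ge 1/\alpha)$, followed by the bound $\mathbb{E}_{\mathbb{H}}[E]\le 1$. Your remark that the probability must be taken under $\mathbb{H}$ (that is, at $y=Y_{n+1}$ in the conformal application) is a useful clarification that the paper leaves implicit.
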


Therefore, replacing the P-variable \(P_n\) in~\eqref{pval_CP} with an E-variable \(E_n\) as the test statistic yields the conformal prediction set  
\[
    \mathcal{C}^{\textbf{ev}}(X_{n+1})
    = \bigl\{\, y \in \mathcal{Y} : E_n(y) < 1/\alpha \,\bigr\},
\]
which has valid finite-sample coverage as a direct consequence of Proposition~\ref{Prop:markov_coverage}. We will refer to this e-value formulation as e-CP.

\citet{balinsky2024} proposed such e-variable of the form  
\begin{equation}\label{balinsky_evalue}
E_{n}(y)
:= \frac{(n+1)\,s_{\hat\mu}(X_{n+1},y)}
        {\sum_{i=1}^n s_i + s_{\hat\mu}(X_{n+1},y)},
\quad y \in \mathcal{Y},
\end{equation}
which is exact, and can be directly employed within the CP framework. 

One advantage of e-CP is that it naturally connects CP to the broader theory of e-values, thereby inheriting many of their desirable properties. Recent work by \citet{gauthier2025values,gauthier2025backward}  demonstrates how e-CP generalizes classical CP in certain applications. However, an important open question remains: \textit{how efficient is e-CP compared to p-CP?}

For the remainder of the paper, we omit the explicit notation `$y \in \mathcal{Y}$' when writing prediction sets.

\subsection{Fundamental properties of e-values}
\label{subsec:evals_prop}
We briefly review several properties of e-values, mainly derived from \citet{vovk2021values}. These properties will later be leveraged in our e-CP methods.

\paragraph{E-Merging.} 

Merging e-variables is done via \emph{e-merging functions}, which essentially transform a vector of $K \geq 1$ e-variables \(\mathbf{E}=(E_1,\ldots,E_K)\), for any hypothesis, into a single e-variable \(M(\mathbf{E})\). For example, it is easy to check that the arithmetic mean is an e-merging function. An e-merging function $M$ is said to be \emph{admissible}, if there is no other e-merging function $N$ such that $N \geq M$ (pointwise) and $N(\mathbf{e})>M(\mathbf{e})$ for some $\textbf{e}$. Unlike p-values, e-values admit simple admissible merging rules that remain valid even under arbitrary dependence.\footnote{By contrast, the arithmetic mean of $K$ p-values $p_1, \ldots, p_K$ is generally not a p-value; at best one has $\mathbb{P}\!\Big(\tfrac{1}{K}\sum_{i=1}^K p_i \le \alpha\Big) \le 2\alpha,$ and the constant $2$ is sharp.}

\begin{proposition}
\label{prop:sym_merg}
The arithmetic mean essentially dominates any symmetric e-merging function. 
\end{proposition}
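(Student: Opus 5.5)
This is the result of Vovk and Wang (2021). The plan is to make ``essentially dominates'' precise and then prove it by symmetrisation over permutations plus a point-mass test distribution. Following \citet{vovk2021values}, we say that $N$ essentially dominates $M$ if $N(\mathbf{e})\ge M(\mathbf{e})$ for every $\mathbf{e}\in[0,\infty)^K$ with $M(\mathbf{e})>1$. Domination is only required where $M$ could actually lead to a rejection. We must show that for any symmetric e-merging function $M$ and any $\mathbf{e}$ with $M(\mathbf{e})>1$, we have $M(\mathbf{e})\le \bar e:=\frac1K\sum_{k} e_k$. The set $\{M\le 1\}$ is excluded because $M$ may exceed the mean there: the constant function $M\equiv 1$ is a symmetric e-merging function that beats the mean at $\mathbf{0}$.

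The key step is to test $M$ against a specific distribution. Fix $\mathbf{e}$ and first assume $\bar e\le 1$. Let $\sigma$ be a uniformly random permutation of $\{1,\dots,K\}$ on some probability space, and define $E_k := e_{\sigma(k)}$. Each $E_k$ is then uniform over the multiset $\{e_1,\dots,e_K\}$, so $\mathbb{E}[E_k]=\bar e\le 1$ and every $E_k$ is an e-variable. The $E_k$ are dependent, but the definition of an e-merging function in Section~\ref{subsec:evals_prop} allows arbitrary dependence. Hence $\mathbb{E}[M(\mathbf{E})]\le 1$. Since $M$ is symmetric, $M(\mathbf{E})=M(\mathbf{e})$ almost surely, which gives $M(\mathbf{e})\le 1$. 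So whenever $M(\mathbf{e})>1$ we must have $\bar e>1$.

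Now assume $\bar e>1$ and put $\lambda:=1/\bar e\in(0,1)$. Mix the previous construction with the zero vector. With probability $\lambda$ draw $\mathbf{E}=(e_{\sigma(1)},\dots,e_{\sigma(K)})$, and with probability $1-\lambda$ set $\mathbf{E}=\mathbf{0}$. Then $\mathbb{E}[E_k]=\lambda\bar e=1$, so each $E_k$ is an e-variable. Using symmetry and $M\ge 0$,
\[
1\ \ge\ \mathbb{E}[M(\mathbf{E})]\ =\ \lambda M(\mathbf{e})+(1-\lambda)M(\mathbf{0})\ \ge\ \lambda M(\mathbf{e}),
\]
so $M(\mathbf{e})\le \bar e$. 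In the first case $M(\mathbf{e})\le 1$, so combining the two cases gives $M(\mathbf{e})\le \max(1,\bar e)$ for every $\mathbf{e}$. In particular, $M(\mathbf{e})\le\bar e$ whenever $M(\mathbf{e})>1$, which is exactly essential domination. To close the argument we also check that the arithmetic mean is itself an e-merging function. This follows from linearity of expectation: $\mathbb{E}[\bar E]=\frac1K\sum_k\mathbb{E}[E_k]\le 1$ for any e-variables $E_k$.

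The only delicate point is the precise meaning of ``essentially''. One might try to prove plain domination $M\le\bar e$ everywhere, but this fails on $\{M\le 1\}$, as the constant-one example shows. For the same reason the zero-vector mixing requires $\bar e>1$. I would state the definition explicitly at the start and handle the two cases as above. A further technical point is that an e-merging function must be valid for all dependence structures and all probability spaces. The constructed $\mathbf{E}$ is a legitimate dependent vector on a finite probability space, so no measurability issues arise.
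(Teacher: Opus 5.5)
The paper does not prove this proposition itself; it is imported as background from \citet{vovk2021values}. Your argument is correct and is essentially their proof: take a uniformly random permutation of $\mathbf{e}$, keep it on an independent event of probability $1/\max(\bar e,1)$, and replace it by $\mathbf{0}$ otherwise. This is your two cases merged into one via $a=\max(\bar e,1)$, and it gives $M(\mathbf{e})\le\max(\bar e,1)$. Your argument uses neither the monotonicity nor the Borel measurability that Vovk and Wang build into the definition of an e-merging function, so it also goes through under the paper's looser definition.
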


\begin{proposition}[\citet{wang2025only}]
\label{prop:Wang_merg}
The only admissible merging functions for arbitrary e-values are weighted arithmetic means.
\end{proposition}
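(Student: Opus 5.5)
We prove Proposition~\ref{prop:Wang_merg} in two directions: weighted arithmetic means are admissible e-merging functions, and every admissible e-merging function is a weighted arithmetic mean. Throughout, an e-merging function is a map $M:[0,\infty)^K\to[0,\infty)$ such that $\mathbb{E}[M(\mathbf{E})]\le 1$ for every probability space and every vector of e-variables $\mathbf{E}=(E_1,\ldots,E_K)$ with arbitrary dependence. We read "weighted arithmetic mean" as $M_{\lambda,w}(\mathbf e)=\lambda+(1-\lambda)\sum_k w_k e_k$ with $w$ in the simplex and $\lambda\in[0,1]$. The constant $\lambda$ accounts for the fact that the constant $1$ is itself an e-merging function; in the $\lambda=1$ case the constant term is handled separately.

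\emph{Validity and admissibility of weighted means.} Validity follows from linearity of expectation. For admissibility, suppose $N\ge M_{\lambda,w}$ is an e-merging function with $N(\mathbf e_0)>M_{\lambda,w}(\mathbf e_0)$ at some point $\mathbf e_0$. We construct exact e-variables $\mathbf E$ that put positive mass on $\mathbf e_0$ and satisfy $\mathbb{E}[E_k]=1$ for all $k$. Concretely, with probability $\epsilon$ set $\mathbf E=\mathbf e_0$, and otherwise set $\mathbf E$ to a point $\mathbf c$ chosen so that each coordinate has mean exactly $1$. This requires $\epsilon e_{0,k}\le 1$, which we arrange by taking $\epsilon$ small. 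Then $\mathbb{E}[M_{\lambda,w}(\mathbf E)]=1$, so $\mathbb{E}[N(\mathbf E)]>1$, a contradiction.

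\emph{Every admissible $M$ is a weighted mean.} This direction is the core of the argument, and we proceed in three steps.

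\begin{enumerate}
\item We show that any e-merging function $M$ is dominated by some $M_{\lambda,w}$. Consider the two-point distributions $\mathbf E$ that equal $\mathbf e$ with probability $p$ and $\mathbf 0$ otherwise, with $p\max_k e_k\le 1$ so that each $E_k$ is an e-variable. Their constraints yield inequalities of the form $pM(\mathbf e)+(1-p)M(\mathbf 0)\le 1$. More generally, testing $M$ against all finitely supported joint laws with coordinate means at most $1$ shows that $M$ lies below the "concave envelope" constraint. In other words, the set $\{(\mathbf e,t): t\le M(\mathbf e)\}$ is separated from the region $\{\mathbb{E}[M]>1\}$.
\item We apply a Hahn--Banach / LP-duality argument. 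The condition $\sup_{\text{laws with } \mathbb{E}[E_k]\le 1}\mathbb{E}[M(\mathbf E)]\le 1$ is an infinite linear program over measures. Its dual asks for $\lambda_0\ge0$ and $w_k\ge 0$ with $\lambda_0+\sum_k w_k\le 1$ and $M(\mathbf e)\le \lambda_0+\sum_k w_k e_k$ for all $\mathbf e$. Establishing zero duality gap gives the desired dominating affine function.
\item Admissibility then forces equality $M=M_{\lambda,w}$, since $M\le M_{\lambda,w}$ and $M_{\lambda,w}$ is itself an e-merging function.
\end{enumerate}

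\emph{Main obstacle.} The difficult step is the duality in step 2: justifying strong duality without compactness, because $\mathbf e$ ranges over the unbounded set $[0,\infty)^K$ and $M$ need not be measurable or continuous a priori. The plan is to first restrict to finitely supported laws, where the primal is a finite LP for each finite support set and strong duality holds. Feasibility of the dual for every finite set gives a family of affine majorants with coefficients in a compact simplex. A compactness argument (finite-intersection property over the simplex of $(\lambda_0,w)$) then yields a single affine majorant valid on all of $[0,\infty)^K$. Finally, we check that the constant term can be absorbed so that the coefficients sum to $1$: if they summed to less than $1$, increasing $\lambda_0$ would strictly dominate, contradicting admissibility.
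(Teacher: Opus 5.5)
The paper gives no proof of this proposition. It is imported from \citet{wang2025only} and used only to motivate weighted averaging in WECA, so there is no in-paper argument to compare against. Your self-contained argument is correct:
\begin{itemize}
\item Reading ``weighted arithmetic mean'' as a convex combination of $1,e_1,\dots,e_K$ is the right interpretation. The constant $1$ is admissible by your own two-point argument, so without the constant term the statement would be false.
\item The admissibility direction, using a law with exact means $\mathbb{E}[E_k]=1$ that puts positive mass on $\mathbf e_0$, is fine.
\item The necessity direction goes through as planned in your last paragraph: finite-LP duality on each finite support $S$, then the finite-intersection property in a compact coefficient set, then normalization of the constant.
\end{itemize}
Since only finitely supported laws are used, you need no measurability, continuity or monotonicity of $M$. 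The result therefore also holds under the stricter Vovk--Wang definition (increasing Borel maps), because the affine majorants are themselves increasing and Borel.

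There are two points to tighten. First, put $\mathbf 0$ into every finite $S$. This makes the primal LP feasible; without such a point the primal can be infeasible and the dual unbounded. It also fixes the sign of $\lambda_0$. As the multiplier of the equality constraint $\sum_{\mathbf e\in S}q_{\mathbf e}=1$, $\lambda_0$ is a priori free in sign, and it is the dual constraint at $\mathbf 0$ that gives $\lambda_0\ge M(\mathbf 0)\ge 0$. Only then do the coefficients lie in the compact set your compactness step uses.

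Second, your Step 1 is only heuristic. Two-point laws against $\mathbf 0$ yield bounds like $M(\mathbf e)\le\max(1,\max_k e_k)$, not an affine majorant, so all the work is in Step 2.

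If you want to avoid LP language, there is an equivalent shortcut. The set $H:=\mathrm{conv}\{(\mathbf e,M(\mathbf e))\}+(\mathbb{R}^K_{\ge 0}\times\mathbb{R}_{\le 0})$ is convex. By the e-merging property on finitely supported laws, it does not contain $(\mathbf 1,1+\epsilon)$. A separating functional must then be nonpositive in $\mathbf x$ and nonnegative in $t$. Testing it at $(\mathbf 0,M(\mathbf 0))$ shows its $t$-coefficient is strictly positive. This gives $M(\mathbf e)\le\lambda_0^{\epsilon}+\mathbf a^{\epsilon}\cdot\mathbf e$ with nonnegative coefficients summing to $1+\epsilon$. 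Letting $\epsilon\downarrow 0$ along a convergent subsequence yields the normalized majorant directly, with no separate normalization step.
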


Given a vector of $K$ e-variables $\mathbf{E}=(E_1,\ldots,E_K)$, the choice of e-merging function directly affects the efficiency of the resulting e-CP set. Indeed, if $M_1 \geq M_2$ are two e-merging functions, then for any $\alpha\in(0,1)$ we  have $\{M_1(\mathbf{E}) < 1/\alpha\} \subseteq \{M_2(\mathbf{E}) < 1/\alpha\}$, while both sets satisfy coverage.

\paragraph{Extensions of MIs.} 
The practicality of e-values stems in part from their connection to Markov's Inequality, which enables direct thresholding. We present here two useful extensions of MI when working with e-variables. 

\begin{proposition}[\citet{ramdas2023randomizedexchangeableimprovementsmarkovs}]
\label{prop:URMI}
Let $E$ be an e-variable, independent of $U \sim \mathrm{Unif}(0,1)$. Then,
\begin{equation*}
    \mathbb{P}\!\left(E > \tfrac{U}{\alpha}\right) \leq \alpha,
\end{equation*}
which is referred to as the \emph{Uniformly Randomized Markov's Inequality (UR-MI)}.
\end{proposition}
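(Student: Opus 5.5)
The plan is to condition on $E$ and integrate out the independent uniform $U$. The only facts used are the independence of $U$ and $E$, the explicit distribution function of $U$, and $\mathbb{E}[E]\le 1$.

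First, fix a realization $E=e\ge 0$. Since $U\sim\mathrm{Unif}(0,1)$ is independent of $E$, we get
\[
\mathbb{P}\!\left(E > \tfrac{U}{\alpha}\,\middle|\, E=e\right)=\mathbb{P}(U<\alpha e)=\min(\alpha e,1).
\]
This computation only needs the CDF of $U$ on $[0,1]$, and it remains correct when $\alpha e\ge 1$, where the probability is $1$.

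Second, I take expectations over $E$ using the tower property, which gives
\[
\mathbb{P}\!\left(E > \tfrac{U}{\alpha}\right)=\mathbb{E}\big[\min(\alpha E,1)\big]\le \alpha\,\mathbb{E}[E]\le \alpha.
\]
The first inequality is the pointwise bound $\min(\alpha e,1)\le \alpha e$. The second uses the defining property $\mathbb{E}_{\mathbb{H}}[E]\le 1$ of an e-variable. Here all probabilities are taken under the null $\mathbb{H}$, as in Proposition~\ref{Prop:markov_coverage}.

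The only point needing care is measure-theoretic. To avoid conditioning on events of probability zero, I will write the probability as $\mathbb{E}\big[\mathbf{1}(U<\alpha E)\big]$ and apply Fubini's theorem to the product law of $(E,U)$, which is valid by independence. Integrating in $u$ first returns $\min(\alpha E,1)$.

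I would also add a remark comparing this bound with Markov's inequality. It shows that the UR-MI bound is strictly better whenever $U<1$. Indeed, the rejection region $\{E>U/\alpha\}$ contains the Markov region $\{E\ge 1/\alpha\}$ up to a null set, yet it has the same level $\alpha$.
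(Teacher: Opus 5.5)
Your proof is correct and is the standard argument of \citet{ramdas2023randomizedexchangeableimprovementsmarkovs}: integrate out the independent uniform to get $\mathbb{P}(E>U/\alpha)=\mathbb{E}[\min(\alpha E,1)]\le\alpha\,\mathbb{E}[E]\le\alpha$; the paper only cites the result and gives no proof of its own. One small quibble concerns your closing remark, which is not needed for the proof: the randomized region is only weakly larger than the Markov region (its threshold $U/\alpha$ is strictly smaller), so it need not be strictly more powerful, e.g.\ when $E$ takes only the values $0$ and $1/\alpha$ the two regions coincide almost surely.
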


\begin{proposition}[\citet{Manole_2023_exch}]
\label{prop:UR-EMI}
Let $E_1, \dots, E_n$ be exchangeable e-variables, and let $U \sim \mathrm{Unif}(0,1)$ be independent. Then,
\begin{equation*}
    \mathbb{P}\!\left(\bigl\{E_1 \ge \tfrac{U}{\alpha}\bigr\} \cup 
    \biggl\{\sup_{k \leq n} \frac{1}{k}\sum_{i=1}^k E_i \ge \tfrac{1}{\alpha}\biggr\}\right) \leq \alpha.
\end{equation*}
\end{proposition}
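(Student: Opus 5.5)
We prove the exchangeable version (Proposition~\ref{prop:UR-EMI}) by reducing it to a statement about a single e-variable, to which the uniformly randomized Markov inequality (Proposition~\ref{prop:URMI}) applies. The natural object is the nonnegative reverse martingale formed by the running averages. Let $\bar E_k := \frac{1}{k}\sum_{i=1}^k E_i$ for $k\le n$, and let $\mathcal{G}_k$ be the $\sigma$-algebra generated by the multiset $\{E_1,\dots,E_k\}$ together with $E_{k+1},\dots,E_n$. By exchangeability, $\mathbb{E}[E_1 \mid \mathcal{G}_k] = \bar E_k$. It follows that $(\bar E_k)$, read in reverse order $k=n,n-1,\dots,1$, is a reverse martingale ending at $\bar E_1 = E_1$. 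Since $\mathbb{E}[\bar E_n] = \mathbb{E}[E_1]\le 1$, it is a nonnegative supermartingale-type process starting with mean at most one.

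We then build the stopping rule. Running $k$ downward from $n$ to $1$, we stop at the first $k$ (i.e.\ the largest index) with $\bar E_k \ge 1/\alpha$. Call this $\tau$, and set $\tau=1$ if no such index exists. Ville's/Doob's maximal inequality for the reverse martingale bounds the probability of the second event by $\alpha$. We need more than that, however: the union of both events must have probability at most $\alpha$. To get this, we define the stopped value $E^\ast := \bar E_\tau$. Optional stopping for the reverse martingale (finite horizon, nonnegative) gives $\mathbb{E}[E^\ast] \le \mathbb{E}[\bar E_n] \le 1$, so $E^\ast$ is an e-variable. Since $U$ is independent of $(E_1,\dots,E_n)$, it is also independent of $E^\ast$.

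Next we show the target event is contained in $\{E^\ast \ge U/\alpha\}$. On $\{\sup_k \bar E_k \ge 1/\alpha\}$ we have $E^\ast = \bar E_\tau \ge 1/\alpha \ge U/\alpha$. On the complement, $\tau = 1$ and $E^\ast = E_1$, so the event $\{E_1\ge U/\alpha\}$ coincides with $\{E^\ast \ge U/\alpha\}$. Hence the union is contained in $\{E^\ast \ge U/\alpha\}$. Proposition~\ref{prop:URMI} is stated with a strict inequality, so we handle the non-strict version directly. Conditioning on $E^\ast$,
\[
\mathbb{P}(E^\ast \ge U/\alpha \mid E^\ast) = \min(1,\alpha E^\ast) \le \alpha E^\ast ,
\]
and taking expectations gives a bound of at most $\alpha$.

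The main obstacle is to set up the filtration correctly so that $\tau$ is a genuine stopping time for the reverse filtration. Specifically, $\{\tau \ge k\}$ must be determined by $\bar E_n,\dots,\bar E_k$, which are $\mathcal{G}_k$-measurable, and we must check the reverse-martingale identity $\mathbb{E}[\bar E_{k-1}\mid\mathcal{G}_k] = \bar E_k$ from exchangeability. With this in place, the optional stopping step is routine because the time index is bounded by $n$. The degenerate case where $E_1$ fails to be integrable does not arise, since all $E_i$ are e-variables.
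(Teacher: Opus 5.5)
Your proof is correct. Because $\mathcal{G}_k$ is decreasing in $k$, the running averages $\bar E_n,\dots,\bar E_1$ form a nonnegative martingale in reverse time, and the capped first-crossing index $\tau$ is a bounded stopping time. Optional stopping then makes $E^\ast=\bar E_\tau$ an e-variable independent of $U$, the union event lies inside $\{E^\ast\ge U/\alpha\}$, and your direct conditioning on $E^\ast$ handles the non-strict inequality.

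The paper gives no proof of this proposition and simply imports it from the cited work. Your argument is essentially the standard one there: stop a nonnegative (super)martingale at its first crossing to get a randomized Ville inequality via the uniformly randomized Markov inequality, then specialize to the exchangeable reverse martingale of running averages.
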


\section{A new set-preserving P2E calibrator}\label{sec:ptoe_cp}

This section establishes the core technical foundation of the paper: a novel calibrator with useful properties from the conformal p-variable to an e-variable yielding tight prediction sets. We focus on CP problems involving the aggregation of multiple p-values.

E-variables can be constructed in different ways, but it is generally unclear how a given design choice affects the constructed prediction set. For example, we show in Appendix~\ref{app:e-variable_balinsky comparaison} that the construction in \eqref{balinsky_evalue} can lead to overly conservative prediction sets in the SCP framework. We therefore start from the conformal p-variable and calibrate it into an e-variable, building on the existing calibration literature.

The proofs of subsequent results are found in Appendix~\ref{proffs_appendix}. 

\subsection{Limitations of classical P-to-E calibrators}

\paragraph{P-to-e calibrators.} A standard tool for converting p-values into e-values is a \emph{p-to-e calibrator} \citep{vovk2021values,evaluespoly}.

\begin{definition}
A \emph{p-to-e calibrator} is a decreasing function $F:[0,1]\to[0,\infty]$ such that, for any hypothesis $\mathcal H$ and any p-variable $P$ for $\mathbb H$, $F(P)$ is an e-variable for $\mathbb H$.
\end{definition}
Examples of p-to-e calibrators include $F_1(p):=-\log(p)$, $F_2(p):=p^{-1/2}-1$, and $F_3(p):=2(1-p)$.
We also highlight the \emph{all-or-nothing} (AoN) calibrator (for a fixed $\alpha\in(0,1)$):
\[
F_{\mathrm{AoN}}(p)\;:=\;\frac{1}{\alpha}\,\mathbf{1}_{\{p\le \alpha\}}.
\]
Thus, given the conformal p-variable $P_n$ in \eqref{pval_CP}, $F_1(P_n),F_2(P_n),F_3(P_n)$ and $F_{\mathrm{AoN}}(P_n)$ are all e-variables for $\mathbb{H}_0$.
We can then use these e-variables for conformal inference, from which the $1-\alpha$ coverage guarantee is recovered simply by thresholding at $1/\alpha$ as explained in Section~\ref{subsec:e-CP}.



\paragraph{Set-Preservation and Efficiency.} Although coverage is immediate once we have an e-variable, the choice of the p-to-e calibrator $F$ can substantially impact the size of the resulting prediction sets. Since predicting this efficiency effect from the calibrator alone is difficult, as it also depends on the underlying score distribution, and the specific construction of the prediction set,  we focus on a structural property of the calibrator: \emph{set-preservation}. 

\begin{definition}

A p-to-e calibrator $F$ is said to be \emph{set-preserving} (at level $\alpha$) if, for all $n$ and for any nonconformity score used to build the p-variable $P_n$ in \eqref{pval_CP}, we have
\[
\mathcal{C}^{\textbf{pv}}(X_{n+1})
    = \{ P_n > \alpha \}
    = \{ E_{n} < 1/\alpha \}
    = \mathcal{C}^{\textbf{ev}}(X_{n+1}).
\]
where  $E_n:=F(P_n)$. It is said to be \textit{set-inflating} if  $$|\{ P_n > \alpha \}|\le 
|\{ E_{n} < 1/\alpha \}|.$$
\end{definition}

Set-preservation ensures that the conformal p-variable and its calibrated e-variable induce  the same SCP set, regardless of the calibration size and nonconformity score. Set-inflation means that the calibrated e-value set has size at least
as large as the standard SCP set, and can therefore be viewed as an
efficiency criterion. We characterize the class of calibrators (independent of the calibration set size $n$) satisfying this property:

\begin{proposition}\label{prop:only_aon_set_preserving}
Among all left-continuous p-to-e calibrators, only $F_{\mathrm{AoN}}$ is set-preserving.

\end{proposition}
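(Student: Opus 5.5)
The proof has two parts: sufficiency (that $F_{\mathrm{AoN}}$ is set-preserving) and necessity (that any left-continuous set-preserving calibrator must equal $F_{\mathrm{AoN}}$).

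\emph{Sufficiency.} For $F_{\mathrm{AoN}}$ we have $E_n<1/\alpha$ if and only if $\mathbf{1}_{\{P_n\le\alpha\}}=0$. This holds if and only if $P_n>\alpha$, so the two sets coincide for every $n$ and every score. We also recall that $F_{\mathrm{AoN}}$ is a calibrator: it is decreasing, and for any p-variable $P$ we have $\mathbb{E}[F_{\mathrm{AoN}}(P)]=\mathbb{P}(P\le\alpha)/\alpha\le 1$. It is also left-continuous, since the indicator of $\{p\le\alpha\}$ is left-continuous at $\alpha$ as a function of $p$.

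\emph{Necessity, step 1: the threshold condition.} Let $F$ be a left-continuous set-preserving calibrator. We first translate set-preservation into a pointwise condition on $F$ over the grid $G_n=\{1/(n+1),\dots,1\}$. By choosing the score (for example $|y-\hat\mu(x)|$ with continuous calibration scores), we can make $P_n(y)$, as $y$ ranges over $\mathcal Y$, attain every value in $G_n$. Set-preservation then forces, for every $n$ and every $p\in G_n$,
\[
F(p)\ge 1/\alpha \iff p\le\alpha .
\]
Since $F$ is decreasing, the union of the grids $\bigcup_n G_n$ is dense in $(0,1]$, and $F$ is left-continuous, this gives $F(p)\ge 1/\alpha$ for all $p\le\alpha$ and $F(p)<1/\alpha$ for $p>\alpha$. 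Left-continuity is exactly what is needed at the boundary: grid points $p\uparrow\alpha$ pass the bound $F\ge 1/\alpha$ to $F(\alpha)$, which covers the case where $\alpha$ itself is not a grid point.

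\emph{Necessity, step 2: the calibrator constraint.} Next we use that $F$ is a calibrator, hence $\int_0^1 F(p)\,dp\le 1$. This follows by taking $P\sim\mathrm{Unif}(0,1)$, which is a valid p-variable. Step 1 gives $\int_0^\alpha F\ge \alpha\cdot(1/\alpha)=1$, so
\[
\int_0^\alpha (F-1/\alpha)\,dp=0 \quad\text{and}\quad \int_\alpha^1 F\,dp=0 .
\]
Because $F\ge 0$ and $F$ is decreasing, $\int_\alpha^1F=0$ forces $F=0$ on $(\alpha,1]$; if $F(p_0)>0$ for some $p_0>\alpha$, then $F\ge F(p_0)>0$ on $(\alpha,p_0)$, a set of positive measure. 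Similarly, $F-1/\alpha\ge 0$ on $(0,\alpha]$ with zero integral forces $F=1/\alpha$ almost everywhere there. Monotonicity upgrades this to every point $p\in(0,\alpha)$: if $F(p)>1/\alpha$, then $F>1/\alpha$ on $(0,p]$, contradicting the almost-everywhere equality. At $p=\alpha$, left-continuity gives $F(\alpha)=1/\alpha$. At $p=0$ the value $F(0)$ is irrelevant, since $P_n\ge 1/(n+1)>0$; we either adopt the convention $F(0)=1/\alpha$ or note that calibrators are identified up to this null point.

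\emph{Main obstacle.} The delicate step is step 1: making rigorous that a suitable score realizes every grid value, and that combining the density of $\bigcup_n G_n$ with monotonicity and left-continuity yields the exact threshold behaviour at $\alpha$, including when $\alpha$ is not of the form $k/(n+1)$. Without left-continuity, a calibrator with $F(\alpha)<1/\alpha$ could agree with $F_{\mathrm{AoN}}$ on every grid not containing $\alpha$. This is why the hypothesis appears in the statement. We also need to handle the boundary $p=0$ as described above.
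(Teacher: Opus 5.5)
Your proof is correct and takes essentially the paper's route. You realize grid p-values just below $\alpha$ via the absolute-residual score, use left-continuity at $\alpha$ to obtain $F(\alpha)\ge 1/\alpha$, and then combine $\int_0^1 F\le 1$ with monotonicity to pin $F$ down as $F_{\mathrm{AoN}}$ on $(0,1]$. The differences are cosmetic: the paper first uses $F(p)\le 1/p$ and argues by contradiction while you argue directly, your derivation of $F<1/\alpha$ above $\alpha$ is not needed, and you explicitly note the harmless freedom in $F(0)$, which the paper leaves implicit.
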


To provide intuition and illustrate the significance of set-preservation, consider an aggregation context where we have $K$ conformal p-variables $\{P_n^{(k)}\}_{k=1}^K$ (e.g., from $K$ predictors, or $K$ folds) for the same hypothesis $\mathbb{H}_0$, and their calibrated e-variables $E_n^{(k)}:=F(P_n^{(k)}), k=1,\dots,K$.
A natural e-merging rule is to average the e-variables, yielding the prediction set:
\begin{equation}
    \label{agg_CP_set_F}
\mathcal{C}^{F}_{\mathbf{Agg}}(X_{n+1})
\;:=\;
\Big\{\frac{1}{K}\sum_{k=1}^K E_n^{(k)} < \frac{1}{\alpha} \Big\}.
\end{equation}
We denote the set $\mathcal{C}^{F}_{\mathbf{Agg}}(X_{n+1})$ to emphasize the dependence on $F$. Through union and intersection bounds, we obtain:
\[
\bigcap_{k=1}^K \{E_n^{(k)}<1/\alpha\}
\subseteq
\mathcal{C}^{F}_{\mathbf{Agg}}(X_{n+1})
\subseteq
\bigcup_{k=1}^K \{E_n^{(k)}<1/\alpha\}.
\]
When the calibrator $F$ is not set-preserving, the individual e-CP sets $\mathcal{C}_k^{\mathbf{ev}}(X_{n+1}):=\{E_n^{(k)} < 1/\alpha\}$ are larger than the SCP sets $\mathcal{C}_k^{\textbf{pv}}(X_{n+1}):=\{P_n^{(k)}>\alpha\}$. This expansion widens the resulting sandwich bound, and  explains why it may lead to overly conservative prediction sets. Conversely, if $F$ is set-preserving, the following sandwich bound holds:$$\bigcap_{k=1}^K \mathcal{C}_k^{\textbf{pv}}(X_{n+1})
\;\subseteq\;
\mathcal{C}^{F}_{\mathbf{Agg}}(X_{n+1})
\;\subseteq\;
\bigcup_{k=1}^K\mathcal{C}_k^{\textbf{pv}}(X_{n+1}),
$$
This justification gives intuition on why set-preserving calibrators can yield tight prediction sets. This behavior is further confirmed empirically in Section~\ref{sec:experiments}, where we observe that set-inflating p-to-e calibrators (like $F_1,F_2,F_3)$ result in significantly larger sets. 

Consequently, to achieve satisfying efficiency, Proposition \ref{prop:only_aon_set_preserving} indicates that the only p-to-e calibrator one should use is $F_{\mathrm{AoN}}$. However, the latter has some limitations as it can take zero values which destroys statistical evidence. Indeed, this can create a practical and theoretical drawback in sequential and optional-stopping contexts, where products of e-values $\prod_t E_t$ naturally arise: a single zero e-value collapses the entire product and prevents accumulation of evidence.
A similar issue can also arise in aggregation: under independence, the product is a well-defined e-merging function \citep{vovk2021values}, but again a single zero forces the merged e-value to zero\footnote{this  relates to the notion of e-power  \citep{evaluespoly}, defined as $\mathbb{E}_{\mathbb{H}}[\log E]$. The e-power of $F_{\mathrm{AoN}}$ is equal to $-\infty$. }. Since our framework can also support these broader uses of e-values, we seek an alternative.

\subsection{Our P2E Calibrator}

We propose a novel set-preserving \textbf{P2E calibrator}, which depends on both $n$ and $\alpha$, denoted by $F_{n,\alpha}$, that addresses the limitations discussed above. Specifically, it satisfies three conditions:

\textbf{A.} $F_{n,\alpha}$ is set-preserving,

\textbf{B.}  $E_{n,\alpha} := F_{n,\alpha}(P_n)$ is an exact e-variable\footnote{Exactness avoids unnecessary conservativeness: if \(\mathbb{E}[E]=c<1\), then the exact e-variable \(E/c\) yields the smaller set $\{E/c<1/\alpha\}\subseteq\{E<1/\alpha\},$ with both  satisfying coverage by MI.},

\textbf{C.} $F_{n,\alpha}$ is smooth, invertible, and strictly positive.



Let $\mathcal{E}$ denote the class of all positive, strictly decreasing real functions on $[0,1]$. For any $f \in \mathcal{E}$, the p-CP set in \eqref{pval_CP_set} can equivalently be written as
\begin{equation}
\label{equality_setsCP}
   \mathcal{C}^{\textbf{pv}}(X_{n+1}) 
   =  \left\{ f(P_n)<f(\alpha) \right\}=\left\{ \frac{f(P_n)}{\alpha f(\alpha)} < 1/\alpha \right\}. 
\end{equation}

Our key idea is to choose a suitable function $f \in \mathcal{E}$ such that the ratio $f(P_n)/\alpha f(\alpha)$ is itself an e-variable.  The following proposition asserts the existence of such a function.  

\begin{proposition}
\label{prop:fnalpha}
Let $n \geq 1$, and let $P_n$ be the p-variable in \eqref{pval_CP}. For $\alpha \in (0,1)$, suppose $\alpha(n+1) \in (1,\infty) \setminus \mathbb{N}$. Then, there exists a function $f_{n,\alpha} \in \mathcal{E}$ such that
\begin{equation}
\label{E(F)=1}
    \mathbb{E}_{\mathbb{H}_0}\!\left(\frac{f_{n,\alpha}(P_n)}{\alpha f_{n,\alpha}(\alpha)}\right) = 1.
\end{equation}

\end{proposition}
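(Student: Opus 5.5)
Under $\mathbb{H}_0$ and with no ties, $P_n$ is uniform on $\{j/(n+1): j=1,\dots,n+1\}$. Condition \eqref{E(F)=1} therefore reduces to a finite equation:
\[
\frac{1}{n+1}\sum_{j=1}^{n+1} f\!\Big(\frac{j}{n+1}\Big) \;=\; \alpha\, f(\alpha).
\]
Only the values of $f$ at the grid points and at $\alpha$ enter. Because $\alpha(n+1)\notin\mathbb{N}$, the point $\alpha$ is not a grid point. Write $m:=\lfloor \alpha(n+1)\rfloor$. The hypothesis $\alpha(n+1)>1$ gives $m\ge 1$, and $\alpha<1$ gives $m\le n$. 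Hence exactly $m$ grid points lie strictly below $\alpha$ and $n+1-m\ge 1$ lie strictly above it.

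I will build $f$ from a one-parameter family and tune the parameter by continuity. Fix any positive, strictly decreasing, smooth $g$ on $[0,1]$, for instance $g(p)=2-p$. Then perturb it only on a small neighbourhood $[0,\varepsilon]$ with $\varepsilon<1/(n+1)$, which contains none of the grid points. Concretely, take $f_c(p)=g(p)+c\,h(p)$ with $c\ge 0$, where $h$ is a smooth, nonnegative, nonincreasing bump. Choose $h$ supported near $0$ and vanishing on $[\varepsilon,1]$; this keeps $f_c$ strictly decreasing and positive. This alone does not help, since it moves neither the grid points nor $\alpha$.

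Instead I will put the free parameter on the grid points below $\alpha$ versus those above. Define
\[
\Phi(f):=\tfrac{1}{n+1}\sum_j f(j/(n+1)) - \alpha f(\alpha).
\]
Monotonicity forces $f(j/(n+1))>f(\alpha)$ for $j\le m$ and $f(j/(n+1))<f(\alpha)$ for $j>m$. To reach $\Phi=0$ I will use two extreme regimes. First, a nearly constant $f$, say $f\equiv 1$ tilted slightly to be strictly decreasing. It has $\Phi\approx 1-\alpha>0$. Second, a nearly step-like $f$: close to $A$ on $[0,\alpha)$ and close to a small value $\eta$ on $(\alpha,1]$, with a steep but smooth strictly decreasing transition placed just left of $\alpha$, so that $f(\alpha)\approx\eta$.

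This second regime needs care, because the grid points below $\alpha$ would then have value $\approx A$. Recompute: it gives $\Phi\approx mA/(n+1)+\dots>0$, which has the wrong sign. So I reverse the arrangement. Put the steep drop just right of $\alpha$, after $\alpha$ but before the next grid point $(m+1)/(n+1)$; this uses $\alpha$ not being a grid point. Then $f\approx A$ on $[0,\alpha]$, including at $\alpha$, and $f\approx\eta$ above. This yields
\[
\Phi\approx \frac{mA}{n+1}+\frac{(n+1-m)\eta}{n+1}-\alpha A .
\]
Since $m/(n+1)<\alpha$ strictly (again by non-integrality), this is negative once $\eta$ is small relative to $A(\alpha-m/(n+1))$.

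Finally I connect the two regimes by the convex combination $f_t=(1-t)f_0+tf_1$. Every such combination is positive, smooth and strictly decreasing. $\Phi(f_t)$ is affine in $t$, positive at $t=0$ and negative at $t=1$, so the intermediate value theorem gives a $t^*$ with $\Phi(f_{t^*})=0$. Setting $f_{n,\alpha}=f_{t^*}\in\mathcal{E}$ gives \eqref{E(F)=1}.

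The main obstacle is the sign bookkeeping for the step-like regime. It must be checked carefully that the transition lies in the open gap $(\alpha,(m+1)/(n+1))$, and that the inequality $m<\alpha(n+1)$ supplies the strict negativity. This is exactly where both hypotheses, $\alpha(n+1)\notin\mathbb{N}$ and $\alpha(n+1)>1$, are used.
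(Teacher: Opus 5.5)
Your argument is correct, but it is organized differently from the paper's proof of this proposition. The paper inserts $\alpha$ into the grid and encodes a positive decreasing sequence of values by a base value $c=u_{N+1}>0$ and positive increments $\delta_i=u_i-u_{i+1}$. It then shows that $(\star)$ becomes a single linear equation $\beta_0c+\sum_i\beta_i\delta_i=0$. Every coefficient is positive except $\beta_{j+1}=j-\alpha N<0$, which is the coefficient of the drop between $\alpha$ and the next grid point $(j+1)/N$ (here $N=n+1$ and $j=\lfloor\alpha N\rfloor$ is your $m$). Lemma~\ref{lem:balance} then gives a strictly positive solution, and $f$ is obtained by piecewise-linear interpolation.

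You work instead in function space. You exhibit two elements of the convex cone $\mathcal{E}$ on which the linear functional $\Phi$ has opposite signs, and connect them by a segment. Underneath, the mechanism is the same. Your nearly constant $f_0$ is the direction $c$, with $\beta_0/N=1-\alpha>0$, matching your $\Phi(f_0)\approx 1-\alpha$. Your steep drop inside $(\alpha,(m+1)/(n+1))$ is exactly the unique negative direction $\delta_{j+1}$. Your construction is essentially the paper's proof of Theorem~\ref{main_theorem} (sigmoid with $C\to 0$ versus $C\to\infty$, jump at $s$ in the same gap), with a linear homotopy in place of the steepness parameter.

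The two routes buy different things. The paper's route describes all admissible grid values at once, as the positive solutions of one linear equation. Yours avoids the increment bookkeeping, directly yields a smooth $f$ when $f_0,f_1$ are smooth, and isolates precisely where non-integrality enters: the strict inequality $m/(n+1)<\alpha$ and the nonemptiness of the gap.

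For a write-up, drop the two false starts (the bump near $0$ and the step left of $\alpha$). Replace the ``$\approx$'' claims by explicit choices. For example, take $f_0(p)=1-\epsilon p$. Take $f_1$ equal to $A-\epsilon p$ on $[0,\alpha]$, dropping strictly to at most $\eta$ by $(m+1)/(n+1)$, and staying positive and strictly decreasing afterwards. Then $\Phi(f_1)\le -A\bigl(\alpha-\tfrac{m}{n+1}\bigr)+\eta+\epsilon\alpha^2<0$ for small $\eta,\epsilon$. Also, your closing sentence overstates the role of $\alpha(n+1)>1$: neither your argument nor the paper's needs $m\ge 1$, and only $\alpha(n+1)\notin\mathbb{N}$ is essential.
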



We denote by $\mathcal{E}_{n,\alpha}$ the set of all functions of the form $p\to f_{n,\alpha}(p)/\alpha f_{n,\alpha}(\alpha)$ of Proposition~\ref{prop:fnalpha}, that we call \textbf{P2E calibrators}. 
Recall that these functions differ from standard p-to-e calibrators, as they depend on both $\alpha$ and $n$. Additionally, we show that all P2E calibrators necessarily converge to $F_{\text{AoN}}$.
\begin{proposition}
\label{prop:convergence_AON}
   Let $n_\ell\to \infty$ be any sequence such that $\alpha(n_\ell+1)\notin \mathbb{N}$. Then, for any $F_{n_\ell}\in\mathcal{E}_{n_\ell,\alpha}$  we have $F_{n_\ell}(p)\xrightarrow{} F_{\mathrm{AoN}}(p) \quad \forall p\in(0,1).$
\end{proposition}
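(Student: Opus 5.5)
The argument uses only two facts: under $\mathbb{H}_0$, $P_n$ is uniform on the grid $\{j/(n+1): j=1,\dots,n+1\}$, and every $F\in\mathcal{E}_{n,\alpha}$ is exact and built from a strictly decreasing positive $f$. Write $F_n(p)=f_n(p)/(\alpha f_n(\alpha))$ and $m_n:=\lfloor \alpha(n+1)\rfloor$. Since $\alpha(n+1)\notin\mathbb{N}$, the grid points with $j\le m_n$ are exactly those strictly below $\alpha$, and those with $j\ge m_n+1$ are strictly above $\alpha$.

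\textbf{Step 1: the value of $F_n$ at $\alpha$.} From the definition, $F_n(\alpha)=1/\alpha$ for every $n$, and $F_n$ is strictly decreasing. Hence
\[
F_n(p)<1/\alpha \text{ for } p>\alpha, \qquad F_n(p)>1/\alpha \text{ for } p<\alpha .
\]

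\textbf{Step 2: the case $p\in(\alpha,1)$.} Here the target is $F_n(p)\to 0$. For fixed $p>\alpha$ and large $n$, the number of grid points $j/(n+1)\in(\alpha,p]$ is about $(p-\alpha)(n+1)\to\infty$. By monotonicity, $F_n$ is at least $F_n(p)$ on all of them. Positivity and exactness then give
\[
1=\frac{1}{n+1}\sum_{j=1}^{n+1}F_n\!\left(\frac{j}{n+1}\right)\ \ge\ \frac{1}{n+1}\sum_{j\le m_n}F_n\!\left(\frac{j}{n+1}\right)+\frac{\#\{j:\, j/(n+1)\in(\alpha,p]\}}{n+1}\,F_n(p).
\]
By Step 1, the first sum is at least $\frac{m_n}{n+1}\cdot\frac1\alpha$, which tends to $1$, but it sits below $1$ for finite $n$. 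So the bound leaves room of order $1-\frac{m_n}{(n+1)\alpha}\in\bigl(0,\frac{1}{(n+1)\alpha}\bigr)$. Since $\alpha(n+1)-m_n<1$, this slack is $O(1/n)$. The inequality then reads
\[
(p-\alpha)\,F_n(p)\ \lesssim\ \frac{1}{(n+1)\alpha},
\]
so $F_n(p)=O(1/n)\to 0$. This already uses $F_n\ge 1/\alpha$ on the low grid points and $F_n\ge 0$ elsewhere.

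\textbf{Step 3: the case $p\in(0,\alpha)$, which I expect to be the main obstacle.} The target is $F_n(p)\to 1/\alpha$, and Step 1 gives only the lower bound $F_n(p)\ge 1/\alpha$. For an upper bound, note that every grid point $j/(n+1)\le p$ has $F_n\ge F_n(p)$, and every other grid point with $j\le m_n$ has $F_n\ge 1/\alpha$. Exactness then yields
\[
1\ \ge\ \frac{\lfloor p(n+1)\rfloor}{n+1}\,F_n(p)+\frac{m_n-\lfloor p(n+1)\rfloor}{n+1}\cdot\frac1\alpha .
\]
Rearranging,
\[
\frac{\lfloor p(n+1)\rfloor}{n+1}\left(F_n(p)-\frac1\alpha\right)\ \le\ 1-\frac{m_n}{(n+1)\alpha}\ =\ O(1/n).
\]
Since $\lfloor p(n+1)\rfloor/(n+1)\to p>0$, this gives $F_n(p)-1/\alpha=O(1/n)\to 0$.

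The delicate point is the careful use of $\alpha(n+1)\notin\mathbb{N}$ to place $\alpha$ strictly between two grid points, so that the grid-point counts are correct and $m_n/(n+1)\to\alpha$. Once that is in place, both cases follow from the same exactness-plus-monotonicity squeeze.
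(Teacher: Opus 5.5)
Your proof is correct and is essentially the paper's argument. The paper multiplies $(\star)$ by $n+1$ and subtracts $\lfloor\alpha(n+1)\rfloor f_n(\alpha)$, which gives two nonnegative blocks whose sum equals $(\alpha(n+1)-\lfloor\alpha(n+1)\rfloor)f_n(\alpha)<f_n(\alpha)$; it then lower-bounds each block by monotonicity exactly as you do, reaching the same bounds $0\le \alpha F_n(p)-1\le 1/\lfloor p(n+1)\rfloor$ for $p<\alpha$ and $\alpha F_n(p)\le 1/(\lfloor p(n+1)\rfloor-m_n)$ for $p>\alpha$. The only case you leave implicit is $p=\alpha$, which your Step 1 already settles since $F_n(\alpha)=1/\alpha=F_{\mathrm{AoN}}(\alpha)$.
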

Thus, Proposition \ref{prop:convergence_AON} establishes a performance guarantee for P2E calibrators: \textit{Unlike standard p-to-e calibrators, they are set-preserving, and asymptotically recover the behavior of $F_{\mathrm{AoN}}$ which is itself set-preserving.}

We now present our main theorem, providing an explicit P2E calibrator which satisfies the three conditions \textbf{A-C}.  

\begin{theorem}
\label{main_theorem}
Let $P_n$ be the p-variable in \eqref{pval_CP} and suppose $\alpha(n+1) \in (1,\infty)\setminus \mathbb{N}$. Then, for any $s \in \left(\alpha, \frac{\lceil \alpha(n+1)\rceil}{n+1}\right)$, there exists $C_{n,\alpha} > 0$ such that 
\begin{equation}
\label{final_evalue}
F_{n,\alpha}(p) :
= \frac{1}{\alpha} \cdot 
\frac{1 + \exp\!\big(C_{n,\alpha}(\alpha - s)\big)}
     {1 + \exp\!\big(C_{n,\alpha}(p - s)\big)}
\end{equation}
is in the set $\mathcal{E}_{n,\alpha}$. Additionally, $F_{n,\alpha}\ge F_{\mathrm{AoN}}$ (pointwise).
\end{theorem}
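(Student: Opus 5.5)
Under $\mathbb{H}_0$, $P_n$ is uniform on $\{j/(n+1): j=1,\dots,n+1\}$. Write $m:=\lceil\alpha(n+1)\rceil$, so that $(m-1)/(n+1)<\alpha<m/(n+1)$, because $\alpha(n+1)\notin\mathbb{N}$. Put $g_C(p):=1/(1+\exp(C(p-s)))$. Then $F_{n,\alpha}(p)=g_C(p)/(\alpha\,g_C(\alpha))$, which has exactly the form $f(p)/(\alpha f(\alpha))$ with $f=g_C\in\mathcal{E}$ (positive, strictly decreasing, smooth) for every $C>0$. By \eqref{equality_setsCP}, set-preservation therefore holds automatically, and the only thing to arrange is exactness. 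Define
\[
\Phi(C):=\mathbb{E}_{\mathbb{H}_0}\big[F_{n,\alpha}(P_n)\big]=\frac{1}{n+1}\sum_{j=1}^{n+1}\frac{1+e^{C(\alpha-s)}}{\alpha\,\big(1+e^{C(j/(n+1)-s)}\big)}.
\]
I will look for $C_{n,\alpha}>0$ with $\Phi(C_{n,\alpha})=1$, using the intermediate value theorem on $(0,\infty)$. $\Phi$ is clearly continuous in $C$.

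\textbf{Limit as $C\to0^+$.} Each term tends to $1/\alpha$, so $\Phi(C)\to 1/\alpha>1$.

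\textbf{Limit as $C\to\infty$.} Since $s>\alpha$, the numerator $1+e^{C(\alpha-s)}\to1$. The condition $s\in(\alpha,m/(n+1))$, together with $\alpha>(m-1)/(n+1)$, gives $j/(n+1)<s$ exactly when $j\le m-1$, and $j/(n+1)>s$ exactly when $j\ge m$. Hence the terms with $j\le m-1$ tend to $1/\alpha$ and those with $j\ge m$ tend to $0$. So $\Phi(C)\to (m-1)/((n+1)\alpha)<1$, using $m-1<\alpha(n+1)$. This is precisely where the choice of $s$ strictly between $\alpha$ and the next grid point is needed: it forces no grid point to equal $s$, so no term converges to $1/(2\alpha)$, and it places exactly the grid points $j/(n+1)\le\alpha$ on the surviving side. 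The hypothesis $\alpha(n+1)>1$ is not needed for this limit; it only makes the interval for $s$ nontrivial and keeps the setting consistent with Proposition~\ref{prop:fnalpha}.

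\textbf{Conclusion and the pointwise bound.} By the intermediate value theorem there is $C_{n,\alpha}>0$ with $\Phi(C_{n,\alpha})=1$, so $F_{n,\alpha}\in\mathcal{E}_{n,\alpha}$. For the bound $F_{n,\alpha}\ge F_{\mathrm{AoN}}$, note $F_{n,\alpha}>0=F_{\mathrm{AoN}}$ for $p>\alpha$. For $p\le\alpha$, $F_{n,\alpha}$ is decreasing, so $F_{n,\alpha}(p)\ge F_{n,\alpha}(\alpha)=1/\alpha=F_{\mathrm{AoN}}(p)$.

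The only delicate step is the $C\to\infty$ limit, and specifically the bookkeeping of which grid points lie below $s$. Uniqueness of $C_{n,\alpha}$ is not required, and I would not try to prove monotonicity of $\Phi$.
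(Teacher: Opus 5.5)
Your proof is correct and follows the paper's argument essentially step for step. Both apply the intermediate value theorem in $C$, using the limits $1/\alpha>1$ as $C\to0^+$ and $\lfloor\alpha(n+1)\rfloor/(\alpha(n+1))<1$ as $C\to\infty$, and both get domination over $F_{\mathrm{AoN}}$ from $F_{n,\alpha}(\alpha)=1/\alpha$ together with monotonicity. One minor correction to your side remark: the interval for $s$ is nonempty because $\alpha(n+1)\notin\mathbb{N}$, not because $\alpha(n+1)>1$, so the latter hypothesis is not used at all in this theorem.
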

We visualize in Figure \ref{f_n vizualization} the graph of $F_{n,\alpha}$ of Theorem \ref{main_theorem}.

The last domination property of the theorem is very relevant in our context. 
Indeed, when aggregating \(K\) p-variables $P_n^{(1)},...,P_n^{(K)}$,  it holds that 
$\frac{1}{K}\sum_{k=1}^K F_{n,\alpha}(P_n^{(k)}) \ge \frac{1}{K}\sum_{k=1}^K F_{\mathrm{AoN}}(P_n^{(k)}),$
which implies, following the notation of \eqref{agg_CP_set_F}, that
\begin{equation}
    \label{equation:FAon_larger}\mathcal{C}^{F_{n,\alpha}}_{\mathrm{agg}}(X_{n+1})\ \subseteq\ \mathcal{C}^{F_{\mathrm{AoN}}}_{\mathrm{agg}}(X_{n+1}).
\end{equation} Thus, our P2E calibrator \textit{is always more efficient than $F_{\mathrm{AoN}}$ }in this aggregation setting.

\begin{figure}
    \centering
    \begin{tikzpicture}
    \centering
  \begin{axis}[
    width=7cm, height=5cm,
    axis lines=middle,           
      axis line style={-Latex},
    xlabel={$x$},
    xlabel style={at={(axis description cs:0.98,0.1)},anchor=west},
ylabel={$F_{n,\alpha}(x)$},
ylabel style={xshift=4ex},tick label style={font=\scriptsize},
    legend style={
      at={(0.98,0.98)},      
      anchor=north east,     
      font=\scriptsize,   
      draw=none,             
      fill=none              
    }
  ]
  
    \addplot[cyan!20!blue, thick] table[x=x, y=n10] {sigmoid_alpha0.1.dat};
    \addlegendentry{$\alpha=0.1, n=10$}

    \addplot[cyan!50!blue, thick] table[x=x, y=n30] {sigmoid_alpha0.1.dat};
    \addlegendentry{$\alpha=0.1, n=30$}

    \addplot[cyan!90!blue, thick] table[x=x, y=n100] {sigmoid_alpha0.1.dat};
    \addlegendentry{$\alpha=0.1, n=100$}

    \addplot[green!20!blue, dotted, thick] table[x=x, y=n10] {sigmoid_alpha0.2.dat};
    \addlegendentry{$\alpha=0.2, n=10$}

    \addplot[green!70!blue, dotted, thick] table[x=x, y=n30] {sigmoid_alpha0.2.dat};
    \addlegendentry{$\alpha=0.2, n=30$}

    \addplot[green!92!blue, dotted, thick] table[x=x, y=n100] {sigmoid_alpha0.2.dat};
    \addlegendentry{$\alpha=0.2, n=100$}
  \end{axis}
\end{tikzpicture}
    \caption{Behavior of the mapping $x \mapsto F_{n,\alpha}(x)$ for varying values of $n$ and $\alpha$. }
        \label{f_n vizualization}
\end{figure}

For the remainder of this paper, we adopt the P2E calibrator of Theorem \ref{main_theorem}.


\section{Related Work}

CP \citep{saunders1999transduction,vovk2005algorithmic} is a distribution-free framework for uncertainty quantification; see \citet{Fontana_2023,angelopoulos2023conformal} for surveys. SCP \citep{papadopoulos2002inductive,lei2018distribution} requires only one model fit but can be inefficient, while Full-CP \citep{vovk2005algorithmic} is more efficient but computationally costly. Intermediate approaches such as CCP \citep{vovk2015cross,vovk2018cross}, multi-split \citep{solari2022multi}, jackknife+ \citep{barber2021jacknife}, and out-of-bag conformal methods \citep{gupta2022nested} aim to balance efficiency and computation by reusing data more effectively. CP has also been applied to aggregation, where the aim is to combine model uncertainties into a single prediction set. Some methods aggregate the prediction sets \citep{yang2024selectionaggregationconformalprediction,gasparin2024merginguncertaintysetsmajority}, while others operate on the nonconformity scores \citep{luo2025weightedaggregationconformityscores,rivera2025conformalpredictionensemblesimproving,SACP}.


CP with e-values is discussed in \citet{vovk2025econformal}, where it is shown to be conceptually simpler than p-CP. A key advantage is that e-values, which can be transformed into p-values and vice-versa, offer many beneficial properties for hypothesis testing \citep{evaluespoly,vovk2021values,vovk2024merging,wang2025only}.

In the CP framework, \citep{balinsky2024} independently design an e\mbox{-}variable based on exchangeable scores for traditional CP, while \citet{koningCP} derive a related more general class of e\mbox{-}values valid for conformal inference. Building on \citep{balinsky2024}, \citet{gauthier2025values,gauthier2025backward} extend CP to regimes where p-values do not fit; however, their methods can lack efficiency, which we believe is due to the e-variable design. Additionally, none of these works make an explicit link with classical p-CP. In our context, we  study the link between e-CP and p-CP through p-to-e calibrators.  Perhaps the closest approach to P2E is \citep{bickel2025small}, which gives sample\mbox{-}size\mbox{-}dependent mappings from p\mbox{-}values to Bayes factors and not in the CP context. 

\section{Our P2E Calibrator For Cross-Conformal Prediction and Aggregation}


\paragraph{E-Cross-Conformal Prediction.}

Cross-Conformal Prediction (CCP) \citep{vovk2015cross, vovk2018cross} strikes a balance between SCP, which can lack efficiency, and full CP, which is computationally intensive \citep{vovk2005algorithmic}. CCP can be viewed as a combination of SCP and cross-validation. The dataset \(\mathcal D\), indexed by \(\mathcal I\), is partitioned into
\(K\) equal folds \(\mathcal D_1,\ldots,\mathcal D_K\) with index sets
\(\mathcal I_1,\ldots,\mathcal I_K\), where
\[
\mathcal I=\bigsqcup_{k=1}^K \mathcal I_k,
\qquad |\mathcal I_k|=m=|\mathcal I|/K .
\] 

We assume $\alpha(m+1)\in(1,\infty)\setminus\mathbb N$. For each observation with index $i$, the CCP score is computed using a model trained without the fold containing $i$: $s_i = s_{\hat\mu(\mathcal{D}\setminus \mathcal{D}_{k(i)})}(X_i, Y_i), 
\quad i=1,\dots,n$, where $\mathcal{D}_{k(i)}$ denotes the fold containing observation $i$. We assume the scores have a continuous distribution.

As formulated in \citet{gasparinimproving}, the CCP set is defined as
\begin{equation}
\label{ccp_set}
\mathcal{C}^{ccp}(X_{n+1}) 
:= \Bigl\{\, y : \frac{1}{K} \sum_{k=1}^{K} P_k(y) 
> \alpha + (1-\alpha)\tfrac{K-1}{K+|\mathcal{I}|} \Bigr\},
\end{equation}
where the fold-wise conformal p-values are given by
\begin{equation}
\label{cross_pval}
P_k(y) := \frac{1+\sum_{i\in\mathcal{I}_k} 
\mathbf{1}\!\left\{ s_{\hat\mu(\mathcal{D}\setminus \mathcal{D}_{k(i)})}(X_{n+1},y)\le s_i \right\}}{m+1}.
\end{equation}

The CCP set in \eqref{ccp_set} satisfies the finite-sample coverage guarantee
\begin{equation}
\mathbb{P}\!\big(Y_{n+1} \in \mathcal{C}^{ccp}(X_{n+1})\big) 
\ge 1 - 2\alpha - 2\frac{(1-\alpha)(1-1/K)}{|\mathcal{I}|/K + 1}. \label{eq:ccpbound}
\end{equation}

A key limitation of CCP is its weaker marginal coverage guarantee, even though in practice it often attains empirical coverage close to $1-\alpha$.



\citet{gasparinimproving} further improve the efficiency of CCP by exploiting the exchangeability of fold-wise p-values to leverage stronger aggregation results \citep{Gasparin_2025_exch}. All of their proposed variants come with  a $1-2\alpha$ coverage guarantee. 


Our contribution builds on CCP and its extensions by introducing an e-value–based CCP method that remains valid under arbitrary dependence among the fold-wise p-values, satisfies the $1-\alpha$ finite-sample coverage guarantee, and preserves the computational complexity of standard CCP. The main idea is to transform the fold-wise p-variables $P_k$ \eqref{cross_pval} into e-variables using our P2E calibrator:
\begin{equation}
\label{eval_CCP}
    E_k^{ccp}(y) = F_{m,\alpha}(P_k(y)),\; k=1,\dots,K,
\end{equation}
Then, we admissibly merge them by averaging and apply Proposition~\ref{prop:URMI} to obtain
\begin{equation}
\label{ECCP}
    \mathcal{C}_{ccp}^{\textbf{ev}}(X_{n+1}) = \left\{ \frac{1}{K}\sum_{k=1}^K E_k^{ccp}<U/{\alpha}\right\}.
\end{equation}
We refer to this method as \textbf{ECCP}. In the special case where the \(P_k\), and consequently the \(E_k\), are exchangeable, we further propose two variants that leverage Proposition~\ref{prop:UR-EMI}:
\begin{equation}
\label{Ex_ECCP}
   \mathcal{C}_{ccp}^{\textbf{ev}-ex}(X_{n+1}) = \left\{ \sup_{t\le K}\frac{1}{t}\sum_{k=1}^t E_k^{ccp}<{1}/{\alpha}\right\},
\end{equation}
\begin{equation}
\label{Ex_ECCP_U}
   \mathcal{C}_{ccp,U}^{\textbf{ev}-ex}(X_{n+1})=\mathcal{C}_{ccp}^{\textbf{ev}-ex}(X_{n+1})\cap \left\{ E_1<{U}/{\alpha}\right\},
\end{equation}
with $U\sim \text{Unif}(0,1)$ an independent variable. We refer to these methods as \textbf{ECCP-Exch} and \textbf{UR-ECCP-Exch}. 

\begin{proposition}
\label{prop:ECCP}If $(X_{n+1},Y_{n+1})$ is exchangeable with $\mathcal{D}$,
     the e-CCP set \eqref{ECCP} satisfies the  finite-sample coverage guarantee in \eqref{valid_coverage}.  If, in addition, the fold-wise e-values
\(E^{\mathrm{ccp}}_1,\ldots,E^{\mathrm{ccp}}_K\) are exchangeable, then the sets \eqref{Ex_ECCP} and \eqref{Ex_ECCP_U} satisfy \eqref{valid_coverage}.
\end{proposition}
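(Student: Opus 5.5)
The plan is to reduce everything to the three e-value tools already stated: exactness of the P2E calibrator (Theorem~\ref{main_theorem}), admissible merging by averaging, and the randomized Markov inequalities (Propositions~\ref{prop:URMI} and~\ref{prop:UR-EMI}). Throughout we work under the null $\mathbb{H}_0: Y_{n+1}=y$ and evaluate each set at the true label $y=Y_{n+1}$, so that coverage is equivalent to bounding the probability that the defining inequality fails.

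\emph{Step 1: fold-wise validity.} Fix a fold $k$. The model $\hat\mu(\mathcal{D}\setminus\mathcal{D}_k)$ is trained without using $\mathcal{D}_k$. So, conditionally on $\mathcal{D}\setminus\mathcal{D}_k$, the $m$ calibration points in $\mathcal{D}_k$ and the test point $(X_{n+1},Y_{n+1})$ are exchangeable, and so are their scores under this fixed score function. This holds because exchangeability of $\mathcal{D}\cup\{(X_{n+1},Y_{n+1})\}$ passes to the subfamily $\mathcal{D}_k\cup\{(X_{n+1},Y_{n+1})\}$ conditionally on the remaining points. Since the scores are continuous, $P_k(Y_{n+1})$ is uniform on $\{1/(m+1),\dots,1\}$, exactly like $P_n$ in \eqref{pval_CP} with $n$ replaced by $m$. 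By the assumption $\alpha(m+1)\in(1,\infty)\setminus\mathbb{N}$, Theorem~\ref{main_theorem} applies with $n=m$, and $F_{m,\alpha}\in\mathcal{E}_{m,\alpha}$ gives $\mathbb{E}[E_k^{ccp}(Y_{n+1})\mid \mathcal{D}\setminus\mathcal{D}_k]=1$. Taking expectations, each $E_k^{ccp}$ is an exact e-variable. This is the step that needs the most care: one has to make sure the conditioning argument is legitimate, since the trained model depends on the other folds. The key point is that the score function is measurable with respect to the conditioning $\sigma$-field.

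\emph{Step 2: ECCP.} The arithmetic mean $\bar E=\frac1K\sum_k E_k^{ccp}$ satisfies $\mathbb{E}[\bar E]=1$ by linearity, with no assumption on the dependence between folds. So $\bar E$ is an e-variable. The variable $U$ is drawn independently of the data, so Proposition~\ref{prop:URMI} gives $\mathbb{P}(\bar E> U/\alpha)\le\alpha$. Hence $\mathbb{P}(Y_{n+1}\in\mathcal{C}^{\mathbf{ev}}_{ccp})=\mathbb{P}(\bar E<U/\alpha)\ge 1-\alpha-\mathbb{P}(\bar E=U/\alpha)$. The equality event has probability zero because $U$ is continuous and independent of $\bar E$; this follows by conditioning on $\bar E$.

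\emph{Step 3: exchangeable variants.} Now assume $E_1^{ccp},\dots,E_K^{ccp}$ are exchangeable; by Step 1 each is an e-variable. Proposition~\ref{prop:UR-EMI} then bounds the probability of the union $\{E_1\ge U/\alpha\}\cup\{\sup_{t\le K}\frac1t\sum_{k\le t}E_k\ge 1/\alpha\}$ by $\alpha$. The complement of this union is exactly the set \eqref{Ex_ECCP_U} evaluated at $Y_{n+1}$, which gives coverage for UR-ECCP-Exch. The event defining \eqref{Ex_ECCP} contains this complement, so it is also covered with probability at least $1-\alpha$. Alternatively, one can apply Proposition~\ref{prop:UR-EMI} and simply drop the first event from the union.
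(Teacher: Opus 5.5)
Your proposal is correct and follows the same route as the paper's proof sketch. Each $E_k^{\mathrm{ccp}}$ is exact via the P2E calibrator, the average is an e-variable by linearity, UR-MI gives coverage for \eqref{ECCP}, and Proposition~\ref{prop:UR-EMI} gives \eqref{Ex_ECCP_U}, with \eqref{Ex_ECCP} following because its set contains the set of \eqref{Ex_ECCP_U}. You also supply two details the paper leaves implicit: the conditioning argument showing $P_k(Y_{n+1})$ is uniform on $\{1/(m+1),\dots,1\}$ given $\mathcal{D}\setminus\mathcal{D}_k$, and the null event $\{\bar E = U/\alpha\}$ needed to reconcile the strict inequality in Proposition~\ref{prop:URMI} with the strict-inequality set \eqref{ECCP}.
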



\paragraph{E-Conformal Aggregation.}
\label{E-CA}
Conformal Aggregation (CA) combines nonconformity scores or the resulting prediction sets from multiple base models into a single prediction set, while retaining finite-sample validity.  
The e-value framework is thus natural in aggregation problems,  because we can use admissible e-merging rules regardless of the dependence of the e-variables. 

We adopt the SCP framework with $K>1$ calibration sets $\mathcal{D}_{cal}^{(1)},\dots,\mathcal{D}_{cal}^{(K)}$ of sizes $m_1,\dots,m_K$, each associated with a pretrained predictor $\hat{\mu}^1,\dots,\hat{\mu}^K$ and corresponding nonconformity scores $s_1,\dots,s_K$, assumed to have continuous distribution. We assume $\alpha(m_k+1)\in (1,\infty)\setminus\mathbb{N}, \forall k$. For a new $X_{n+1}$, we compute the p-variables:
\begin{equation}
\label{pval_agg}
    P_k(y) = \frac{1}{m_k+1}\Big(1 + \sum_{z\in \mathcal{D}_{cal}^{(k)}} \mathbf{1}\{s_k(z) \ge s_k(X_{n+1},y)\}\Big), 
\end{equation}
and obtain e-variables via our P2E calibrator:
\begin{equation}
\label{eval_agg}
    E_k(y) = F_{m_k,\alpha}\!\big(P_k(y)\big), \quad k=1,\dots,K.
\end{equation}

Let \(U \sim \mathrm{Unif}(0,1)\) be an independent random variable. By admissibly merging the e-values in \eqref{eval_agg}, we proceed analogously to the CCP construction. To achieve stronger performance, as indicated by \citet{wang2025only} result, we employ weighted aggregation of e-values:
\begin{align}
\label{WECA}
\mathcal{C}_{\text{Wagg}}^{\textbf{ev}}(X_{n+1}) 
   &= \Bigl\{\,   \sum_{k=1}^K \omega_k E_k < \tfrac{1}{\alpha} \,\Bigr\},
\end{align}
where the weights are selected to minimize some criterion,
and its randomized version 
$\mathcal{C}_{\text{Wagg}}^{\textbf{ev-U}}(X_{n+1})$,
obtained by replacing the threshold $1/\alpha$ with $U/\alpha$. 

To do so, we split each calibration set into two disjoint parts,
$\mathcal S_{\mathrm{cal}}^{(k)}
=
\mathcal S_{\mathrm{tune}}^{(k)}
\sqcup
\mathcal S_{\mathrm{inf}}^{(k)}.$
The tuning split $\{\mathcal D_{\mathrm{tune}}^{(k)}\}_{k=1}^K$ is used only to select the
aggregation weights
$\omega^\star=(\omega_1^\star,\ldots,\omega_K^\star)\in\Delta_K,$
whereas the inference split $\{\mathcal D_{\mathrm{inf}}^{(k)}\}_{k=1}^K$ is used only to
construct the final conformal e-values. 

\begin{proposition}
Assuming that, for each $k$,
$(X_{n+1},Y_{n+1})$ is independent of $\mathcal D_{\mathrm{tune}}^{(k)}$  and  exchangeable with $\mathcal D_{\mathrm{inf}}^{(k)}$, the sets $\mathcal{C}_{\text{Wagg}}^{\textbf{ev}}(X_{n+1}), \mathcal{C}_{\text{Wagg}}^{\textbf{ev-U}}(X_{n+1}) $ satisfy \eqref{valid_coverage}.
\end{proposition}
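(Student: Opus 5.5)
The weighted e-CA proposition follows by conditioning on the tuning splits and then invoking Markov's inequality in the form of Proposition~\ref{Prop:markov_coverage}, or Proposition~\ref{prop:URMI} for the randomized set.

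\textbf{Conditioning on the tuning data.} Let $\mathcal T$ denote the $\sigma$-field generated by all tuning splits $\{\mathcal D_{\mathrm{tune}}^{(k)}\}_{k=1}^K$ (and the pretrained predictors, which are fixed). The weights $\omega^\star\in\Delta_K$ are $\mathcal T$-measurable. By assumption, $(X_{n+1},Y_{n+1})$ is independent of the tuning data, so conditioning on $\mathcal T$ does not change the joint law of $(X_{n+1},Y_{n+1})$ and $\mathcal D_{\mathrm{inf}}^{(k)}$. Hence, conditionally on $\mathcal T$, the test point is still exchangeable with each $\mathcal D_{\mathrm{inf}}^{(k)}$.

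\textbf{Step 1: each $E_k(Y_{n+1})$ is an exact e-variable conditionally on $\mathcal T$.} Each $P_k$ is built from the inference split of size $m_k'$, and $F$ is taken as $F_{m_k',\alpha}$ (assuming $\alpha(m_k'+1)\notin\mathbb N$, as in the stated conditions). With continuous scores and exchangeability, $P_k(Y_{n+1})$ is uniform on $\{1/(m_k'+1),\dots,1\}$ given $\mathcal T$. Theorem~\ref{main_theorem}, via Proposition~\ref{prop:fnalpha}, then gives
\[
\mathbb E\bigl[F_{m_k',\alpha}(P_k(Y_{n+1}))\mid\mathcal T\bigr]=1 .
\]

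\textbf{Step 2: the weighted mean is an e-variable.} Since the $\omega_k^\star$ are fixed given $\mathcal T$ and sum to one, linearity gives
\[
\mathbb E\Bigl[\sum_k\omega_k^\star E_k\mid\mathcal T\Bigr]=\sum_k\omega_k^\star=1 .
\]
This holds whatever the dependence among the $E_k$, which is the reason for using an arithmetic mean (Proposition~\ref{prop:Wang_merg}).

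\textbf{Step 3: apply Markov's inequality.} Write $M:=\sum_k\omega_k^\star E_k(Y_{n+1})$. Proposition~\ref{Prop:markov_coverage} gives $\mathbb P(M<1/\alpha\mid\mathcal T)\ge 1-\alpha$. Since $U$ is independent of everything, Proposition~\ref{prop:URMI} applied conditionally gives $\mathbb P(M\ge U/\alpha\mid\mathcal T)\le\alpha$, hence $\mathbb P(M<U/\alpha\mid\mathcal T)\ge1-\alpha$. Taking expectations over $\mathcal T$ yields \eqref{valid_coverage} for both $\mathcal C_{\text{Wagg}}^{\textbf{ev}}$ and $\mathcal C_{\text{Wagg}}^{\textbf{ev-U}}$.

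\textbf{Main obstacle.} The delicate point is Step 1. One must check that selecting $\omega^\star$ does not break exchangeability between the test point and $\mathcal D_{\mathrm{inf}}^{(k)}$. The data-splitting assumption handles this: the weights are independent of the inference data and the test point, so conditioning on $\mathcal T$ leaves the p-variable uniform, and weights chosen in this data-dependent way are legitimate.
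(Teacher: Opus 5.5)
Your proof is correct and follows the paper's argument: the paper shows $\mathbb{E}_{\mathbb{H}_0}[E_{\mathrm{inf}}]=\sum_k\mathbb{E}[\omega_k^\star]\,\mathbb{E}[E_k^{J_{k,\mathrm{inf}}}]=1$, using the independence of $\omega^\star$ from the inference split and the test point together with the exactness of each calibrated e-variable, and then applies MI and UR-MI; your conditioning on the tuning $\sigma$-field $\mathcal T$ is the conditional form of that same factorization. One small correction: independence of $(X_{n+1},Y_{n+1})$ from the tuning data alone does not imply that conditioning on $\mathcal T$ leaves the joint law of $(X_{n+1},Y_{n+1})$ and $\mathcal D_{\mathrm{inf}}^{(k)}$ unchanged, so you need the pair (inference split, test point) to be \emph{jointly} independent of the tuning splits (e.g.\ i.i.d.\ data), which is what your final paragraph and the paper actually use.
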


\begin{figure*}
    \centering
    \includegraphics[width=0.98\linewidth]{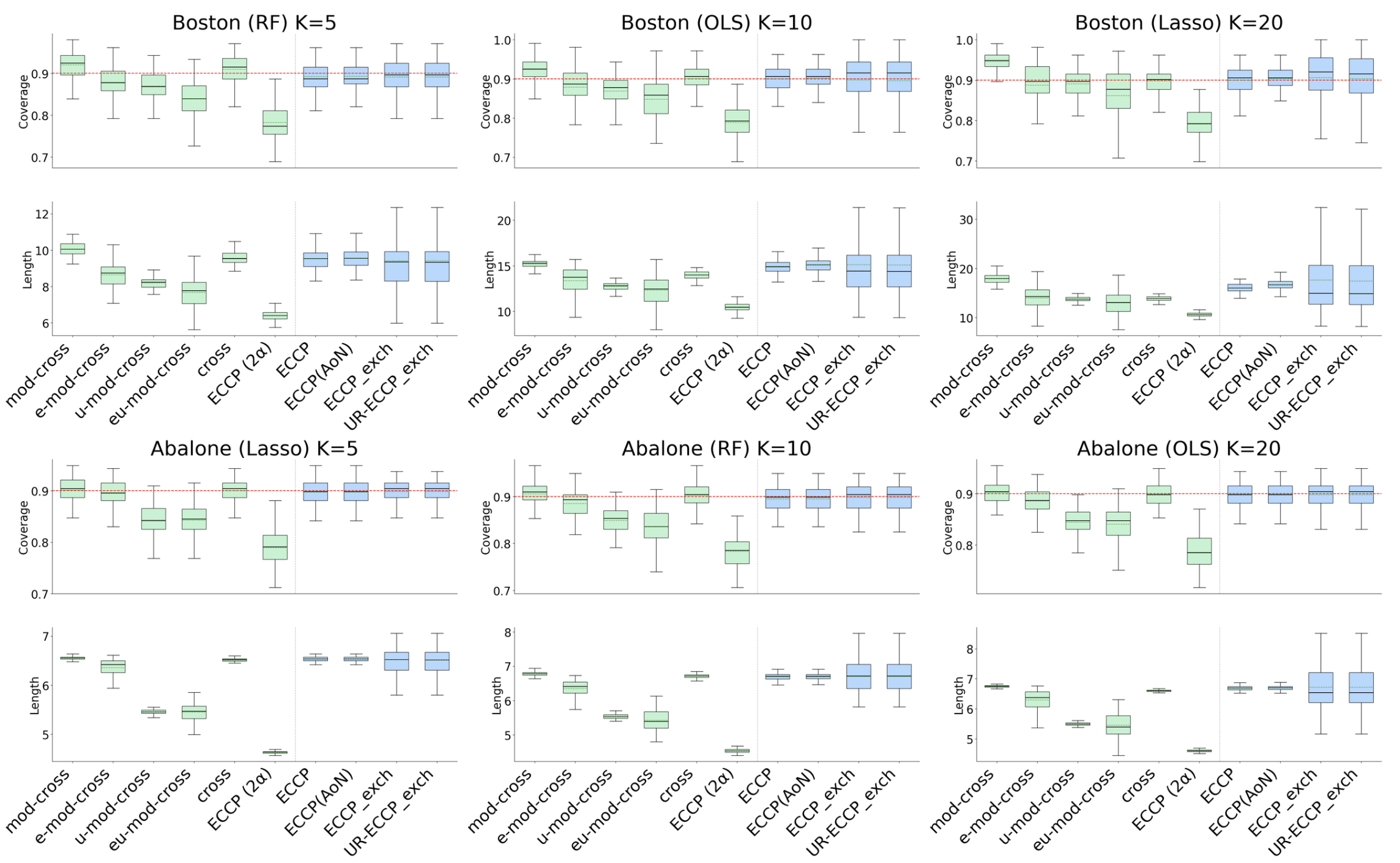}
    \caption{Distribution of mean prediction set length and empirical coverage obtained using different regression algorithms and different Cross-CP methods over 100 seeds, for $\alpha=0.1$}
    \label{fig:CCP_results}
\end{figure*}

We refer to these methods as \textbf{WECA}, and \textbf{UR-WECA}, respectively.
The core advantage of the WECA method lies in its ability to select \emph{data-dependent} weights that are optimized to minimize some type of criterion. In our case, we select the weights to minimize the average prediction set length. We describe the practical implementation of this method and how  coverage holds in Appendix~\ref{app:weca_validity}.

\section{Experiments and Results}
\label{sec:experiments}

We benchmark our methods on OpenML regression datasets \citep{vanschoren2014openml} using an extensive experimental suite that spans a diverse range of base models and multiple CCP configurations\footnote{\url{https://github.com/Nabil-Ala/P2E_calibration}}.

\subsection{Cross-Conformal Prediction}

We follow the experimental design of \citet{gasparinimproving}, where CCP p-values are constructed to be exchangeable, using three regressors: Ordinary Least Squares (OLS), Lasso Regression, and Random Forest (RF). 

\begin{table}[t]
\centering
\small
\setlength{\tabcolsep}{6pt}
\begin{tabular}{lcc}
\toprule
Method & Len. & Cov.  \\
\midrule
\multicolumn{3}{l}{\textbf{Abalone} ($K{=}10$, RF)}\\
ECCP        & $6.70 \pm 0.09$ & $0.90 \pm 0.02$ \\
ECCP($F_1$) & $29.07 \pm 1.93$ & $0.89 \pm 0.03$ \\
ECCP($F_2$) & $10.21 \pm 0.35$ & $0.90 \pm 0.02$ \\
ECCP($F_3$) & $32.02 \pm 1.99$ & $0.89 \pm 0.02$ \\
\midrule
\multicolumn{3}{l}{\textbf{Boston} ($K{=}10$, RF)}\\
ECCP        & $9.76 \pm 0.63$ & $0.89 \pm 0.03$ \\
ECCP($F_1$) & $67.15 \pm 3.99$ & $0.90 \pm 0.03$ \\
ECCP($F_2$) & $53.74 \pm 4.21$ & $0.92 \pm 0.02$ \\
ECCP($F_3$) & $59.18 \pm 4.17$ & $0.90 \pm 0.03$ \\
\bottomrule
\end{tabular}
\caption{CCP results with other calibrators. Values are reported as mean $\pm$ standard deviation across 100 random seeds.}
\label{tab:rf_abalone_boston_k10_other_ptoe}
\end{table}

\paragraph{Methods.}
CCP methods provide different coverage guarantees, so for a fair comparison we group methods having the same finite-sample coverage guarantee:

\begin{table*}
\centering
\caption{Empirical coverage and prediction set size for conformal aggregation methods. Values are reported as mean $\pm$ standard deviation across  $20$ random seeds, for $\alpha=0.05$.}
\label{table:e-CA_results}
\resizebox{\textwidth}{!}{
\begin{tabular}{l cccccccc}
\toprule
Method & \multicolumn{2}{c}{361234} & \multicolumn{2}{c}{361235} & \multicolumn{2}{c}{361237} & \multicolumn{2}{c}{361244} \\
\cmidrule(lr){2-3} \cmidrule(lr){4-5} \cmidrule(lr){6-7} \cmidrule(lr){8-9}
 & Cov. & Len. & Cov. & Len. & Cov. & Len. & Cov. & Size \\
\midrule
CM & 0.98 $\pm$ 0.01 & 3.85 $\pm$ 0.20 & 0.98 $\pm$ 0.01 & 3.27 $\pm$ 0.15 & 0.99 $\pm$ 0.01 & 2.88 $\pm$ 0.12 & 0.98 $\pm$ 0.01 & 6.90 $\pm$ 1.34 \\
CR & 0.97 $\pm$ 0.01 & 3.56 $\pm$ 0.19 & 0.97 $\pm$ 0.01 & 2.44 $\pm$ 0.17 & 0.96 $\pm$ 0.02 & 1.98 $\pm$ 0.15 & 0.98 $\pm$ 0.01 & 6.58 $\pm$ 1.28 \\
$P_{\mathrm{Agg}}$ & 0.98 $\pm$ 0.01 & 3.91 $\pm$ 0.20 & 0.99 $\pm$ 0.01 & 3.33 $\pm$ 0.12 & 0.99 $\pm$ 0.01 & 2.97 $\pm$ 0.10 & 0.98 $\pm$ 0.01 & 6.91 $\pm$ 1.32 \\
COLA-S & 0.95 $\pm$ 0.01 & 2.90 $\pm$ 0.19 & 0.96 $\pm$ 0.02 & 1.71 $\pm$ 0.32 & 0.95 $\pm$ 0.03 & 1.64 $\pm$ 0.25 & 0.97 $\pm$ 0.02 & 5.23 $\pm$ 1.48 \\
WECA & 0.95 $\pm$ 0.01 & 2.88 $\pm$ 0.19 & 0.95 $\pm$ 0.01 & 1.71 $\pm$ 0.29 & 0.95 $\pm$ 0.03 & 1.65 $\pm$ 0.25 & 0.96 $\pm$ 0.02 & 5.09 $\pm$ 2.06 \\
UR-WECA & 0.95 $\pm$ 0.01 & 2.88 $\pm$ 0.18 & 0.95 $\pm$ 0.01 & 1.70 $\pm$ 0.30 & 0.95 $\pm$ 0.03 & 1.64 $\pm$ 0.25 & 0.96 $\pm$ 0.02 & 4.93 $\pm$ 1.86 \\
\bottomrule
\end{tabular}
}
\end{table*}

\begin{table}[t]
\centering
\small
\setlength{\tabcolsep}{4pt}
\begin{tabular}{llcc}
\toprule
& Method & Cov.  & Len.  \\
\midrule
\multirow{8}{*}{\rotatebox[origin=c]{90}{\textbf{361234}}} & WECA($F_1$) & $1.00 \pm 0.00$ & $72.14 \pm 15.33$ \\
 & UR-WECA($F_1$) & $0.95 \pm 0.01$ & $50.36 \pm 10.76$ \\
 & WECA($F_2$) & $1.00 \pm 0.00$ & $7.75 \pm 1.40$ \\
 & UR-WECA($F_2$) & $0.95 \pm 0.01$ & $4.88 \pm 0.43$ \\
 & WECA($F_3$) & $1.00 \pm 0.00$ & $72.14 \pm 15.33$ \\
 & UR-WECA($F_3$) & $0.95 \pm 0.01$ & $65.34 \pm 13.92$ \\

\midrule
\multirow{8}{*}{\rotatebox[origin=c]{90}{\textbf{361235}}} & WECA($F_1$) & $1.00 \pm 0.00$ & $38.43 \pm 2.99$ \\
 & UR-WECA($F_1$) & $0.95 \pm 0.01$ & $28.73 \pm 2.48$ \\
 & WECA($F_2$) & $1.00 \pm 0.00$ & $38.43 \pm 2.99$ \\
 & UR-WECA($F_2$) & $0.96 \pm 0.02$ & $13.98 \pm 1.68$ \\
 & WECA($F_3$) & $1.00 \pm 0.00$ & $38.43 \pm 2.99$ \\
 & UR-WECA($F_3$) & $0.95 \pm 0.01$ & $34.76 \pm 2.93$ \\

\midrule
\multirow{8}{*}{\rotatebox[origin=c]{90}{\textbf{361237}}} & WECA($F_1$) & $1.00 \pm 0.00$ & $32.05 \pm 3.24$ \\
 & UR-WECA($F_1$) & $0.95 \pm 0.02$ & $24.65 \pm 2.71$ \\
 & WECA($F_2$) & $1.00 \pm 0.00$ & $32.05 \pm 3.24$ \\
 & UR-WECA($F_2$) & $0.96 \pm 0.01$ & $15.64 \pm 2.07$ \\
 & WECA($F_3$) & $1.00 \pm 0.00$ & $32.05 \pm 3.24$ \\
 & UR-WECA($F_3$) & $0.95 \pm 0.02$ & $28.89 \pm 2.91$ \\

\midrule
\multirow{8}{*}{\rotatebox[origin=c]{90}{\textbf{361244}}} & WECA($F_1$) & $1.00 \pm 0.00$ & $115.49 \pm 34.62$ \\
 & UR-WECA($F_1$) & $0.97 \pm 0.02$ & $87.96 \pm 25.82$ \\
 & WECA($F_2$) & $1.00 \pm 0.00$ & $115.49 \pm 34.62$ \\
 & UR-WECA($F_2$) & $0.97 \pm 0.02$ & $54.66 \pm 17.04$ \\
 & WECA($F_3$) & $1.00 \pm 0.00$ & $115.49 \pm 34.62$ \\
 & UR-WECA($F_3$) & $0.97 \pm 0.02$ & $103.89 \pm 30.73$ \\

\bottomrule
\end{tabular}
\caption{Performance of WECA and UR-WECA using alternative p-to-e calibrators: $F_1(p):=-\log(p)$, $F_2(p):=p^{-1/2}-1$, $F_3(p):=2(1-p)$.}
\label{tab:weca_rotated}
\end{table}

\textbf{$1-2\alpha$ coverage methods:}  the CCP baseline of \citet{vovk2015cross}, the variants of \citet{gasparinimproving} (\texttt{mod-cross}, \texttt{e-mod-cross}, \texttt{u-mod-cross}, \texttt{eu-mod-cross}), and our ECCP run at level $2\alpha$, denoted $\mathrm{ECCP}(2\alpha)$. 

\textbf{ $1-\alpha$ coverage methods:} our ECCP, ECCP-Exch, and UR-ECCP-Exch.
To highlight the role of our P2E calibrator, we also run $\mathrm{ECCP}$ with alternative p-to-e calibrators: $F_1(p)=-\log p$, $F_2(p)=p^{-1/2}-1$, $F_3(p):=2(1-p)$ and $F_{\mathrm{AoN}}$. 

We denote these variants by $\mathrm{ECCP}(F_1)$, $\mathrm{ECCP}(F_2)$, $\mathrm{ECCP}(F_3)$, and $\mathrm{ECCP}({\mathrm{AoN}})$.

\paragraph{Results.} We report the results as boxplots in Figure~\ref{fig:CCP_results}. Among the $1-2\alpha$ methods (shown in green), $\mathrm{ECCP}(2\alpha)$ is consistently the most efficient: it yields substantially smaller prediction sets compared to CCP and the variants of \citet{gasparinimproving}, while achieving empirical coverage close to the target coverage of $80\%$. 

Turning to the $1-\alpha$ methods (in blue), $\mathrm{ECCP}(\mathrm{AoN})$ shows great efficiency, as expected, since $F_{\mathrm{AoN}}$ is set-preserving. This supports the intuition behind our framework. Although we know theoretically that it induces sets larger than  ECCP (equation \eqref{equation:FAon_larger}), it asymptotically approaches ECCP's performance as the calibration set size increases (equivalently, as K decreases) as expected from Proposition \ref{prop:convergence_AON}.    The variants ECCP-Exch and UR-ECCP-Exch  also perform well on average, but exhibit higher variability across seeds, likely due to the asymmetric nature of their aggregation rule as pointed out by \citet{gasparinimproving}.

Comparing to classic p-to-e calibrators of the literature, $\mathrm{ECCP}(F_1)$, $\mathrm{ECCP}(F_2)$, and $\mathrm{ECCP}(F_3)$, perform substantially worse\footnote{Average length is computed on a finite bounded evaluation grid. For  $F_1,F_2,F_3$, the true conformal set may be unbounded (if the aggregated e-variable is always less than $1/\alpha$).}; we therefore report them separately in Table~\ref{tab:rf_abalone_boston_k10_other_ptoe}. These results align again with the idea that set-inflating calibrators can lead to noticeably weaker efficiency. Additional experiments are provided in Appendix \ref{app:additional_experiments}.

Finally, we also demonstrate in Appendix \ref{app:ECCP_RF} how our ECCP can improve one aspect of stability of standard CCP.

\subsection{Conformal Aggregation}


We consider the SCP framework for aggregation in section \ref{E-CA}, where we consider $K=7$ base learners including linear models, tree-based methods, neural networks, and Bayesian regressors.   
A key advantage of our framework is that WECA can accommodate settings in which different models are calibrated on calibration sets of different sizes. This is more realistic in applications where models may be trained or calibrated using data from different sources. To reflect this setting, we consider heterogeneous calibration sizes by assigning to each predictor a random subset of a larger calibration set; see Appendix~\ref{app:agg_random_splits} for details.

\paragraph{Methods.} We compare our proposed methods, \textbf{WECA} and \textbf{UR-WECA} based on our P2E calibrator, against  state-of-the-art aggregation approaches:  
majority vote (\textbf{CM}) and  its randomized variant (\textbf{CR}) \citep{gasparin2024merginguncertaintysetsmajority};  a p-value–based approach (\textbf{$P_{\text{Agg}}$}), which averages the conformal p-values per model and applies the threshold at $\alpha/2$ instead of $\alpha$. 

Additionally, we implement the recent aggregation method \textbf{COLA-s} from \citep{COLA}, which constructs prediction sets by intersecting conformal sets obtained at optimized miscoverage levels. All the above methods have the $1-\alpha$ coverage guarantee.

\textbf{Results.} The results are reported in Table~\ref{table:e-CA_results}. We observe that all methods achieve the finite-sample coverage guarantee, consistent with our theory. CM, CR and $P_{\text{Agg}}$  often over-cover, reflecting their conservative behavior. Our methods are the most efficient among the aggregation methods, highlighting the benefit of the randomization step with $U\sim\mathrm{Unif}(0,1)$  in the UR-WECA method, which consistently produces tighter prediction sets. The role of set-preservation is further supported by Table~\ref{tab:weca_rotated}, where we evaluate WECA and UR-WECA under the same setup but using alternative calibrators. Again, using the calibrators $F_1,F_2,F_3$ produce overly conservative prediction sets. 

Overall, the results suggest that the gains stem from combining our P2E calibration with admissible e-value merging, further enhanced by randomization.

 More broadly, our e-CP framework is modular and can be extended to additional settings and applications; we discuss   possible future directions in Appendix \ref{app:future_directions}.

\section{Conclusion}

We present an e-value perspective on CP by calibrating the conformal p-variable into an e-variable. We study how p-to-e calibration affects the size of conformal prediction sets, identifying set-preservation and set-inflation as key structural properties for understanding efficiency in the context of aggregating p-values. Building on this insight, we introduce a principled new P2E calibrator that is set-preserving. This construction unlocks the benefits of e-value theory and randomization results that we apply to enhance CA and CCP, while automatically retaining the desired \(1-\alpha\) coverage guarantee. Our experiments support the set-preserving intuition and demonstrate consistent improvements in efficiency. More broadly, our framework provides a practical step toward efficient and robust e-value-based distribution-free uncertainty quantification.


\section*{Impact Statement}

This work advances the field of reliable uncertainty quantification and machine learning. By providing mathematically grounded frameworks that enhance trust and transparency, our approach facilitates the deployment of practical, accessible models tailored for real-world applications.
Although our framework is domain-agnostic, it should not be applied in settings that enable harmful, unethical, or discriminatory uses.


\bibliographystyle{icml2026}
\bibliography{references}
\newpage
\appendix
\onecolumn

\section{Comparison of e-CP from \citep{balinsky2024} and p-CP sets}
\label{app:e-variable_balinsky comparaison}


Many p-CP methods yield sharp prediction sets, whereas e-CP methods offer greater flexibility and stronger theoretical guarantees through e-value theory. Our objective is to preserve the practical efficiency of p-CP while harnessing the theoretical advantages of e-CP. To evaluate this trade-off, we compare the resulting prediction set sizes, providing insight into the efficiency gap between the two approaches.

Recall that SCP operates on a set of calibration scores $s_1,\cdots,s_n$  and a new test point $X_{n+1}$, under the standard CP exchangeability assumption. Assuming the scores are  strictly positive, \citet{balinsky2024} proposed an e-variable defined as
\begin{equation}\label{appendix:balinsky_evalue}
E_{n}(y)
= \frac{(n+1)\,s_{\hat\mu}(X_{n+1},y)}{\sum_{i=1}^n s_i+ s_{\hat\mu}(X_{n+1},y)\,},
\qquad y \in \mathcal{Y}.
\end{equation}

So the corresponding e-CP set is given by
\begin{equation}\label{appendix:e-cpset}
\mathcal{C}^{\mathrm{ev}}(X_{n+1})
= \bigl\{ y \in \mathcal{Y} : E_n(y) < 1/\alpha \bigr\},
\end{equation}
and it enjoys finite-sample marginal coverage by Proposition~\ref{Prop:markov_coverage}. We will refer to this set as the e-CP(Balinsky).


This design offers two practical advantages:
\begin{enumerate}

\item Unlike p-CP, which compares $s_{\hat\mu}(X_{n+1},y)$ to the empirical $(1-\alpha)$ quantile $s_{(\lceil(1-\alpha)(n+1)\rceil)}$, e-CP requires no sorting at prediction time. We simply evaluate the e-variable for each candidate label.

\item It provides a principled choice of e-variable for CP, which can be readily applied or adapted to various settings (see, e.g., \citet{gauthier2025values}).

\end{enumerate}


Let us compare the prediction set \eqref{appendix:e-cpset} and the classic p-CP set. Denoting $S=\sum_{i=1}^n s_i$, we can rewrite \eqref{appendix:e-cpset} as
\begin{equation}\label{appendix:e-CP_set_rewrite}
\mathcal{C}^{\mathrm{ev}}(X_{n+1})=
 \Bigl\{\, y\in\mathcal{Y}: \; s_{\hat\mu}(X_{n+1},y) \;<\;
\frac{S}{\,\alpha(n+1)-1\,} \Bigr\}.
\end{equation}

Recall that the corresponding p-CP set is given by
\begin{equation}
\label{appendix:split_CPset}
\mathcal{C}^{\mathbf{pv}}(X_{n+1})
= \bigl\{\, y \in \mathcal{Y} : s_{\hat{\mu}}(X_{n+1}, y) \le s_{(k)} \,\bigr\},
\end{equation}
where \(k = \lceil (1-\alpha)(n+1) \rceil\).



We note that
\[
k=\lceil (1-\alpha)(n+1)\rceil \;\Rightarrow\; k-1 < (1-\alpha)(n+1),
\]
which implies
\[
n-k+1 \;>\; n-(1-\alpha)(n+1) \;=\; \alpha(n+1)-1.
\]
Since at least the top \(n-k+1\) scores are each \(\ge s_{(k)}\), it follows that
\[
S \;\ge\; \sum_{i=k}^n s_{(i)} \;\ge\; (n-k+1)\,s_{(k)}
\;>\; \bigl(\alpha(n+1)-1\bigr)\,s_{(k)}.
\]
Dividing by \(\alpha(n+1)-1\) (which is assumed to be $>0$) yields
\[
\frac{S}{\alpha(n+1)-1} \;>\; s_{(k)} \quad  \Longrightarrow\quad
\mathcal{C}^{\mathbf{pv}}(X_{n+1})\subseteq \mathcal{C}^{\mathbf{ev}}(X_{n+1}).
\]

Therefore, the p-CP set is always contained within the e-CP set for any target coverage level. In practice, this difference can be substantial. This observation underscores the main limitation of the e-variable defined in~\eqref{appendix:balinsky_evalue}, which tends to produce larger prediction sets. In contrast, our proposed e-variable in~\eqref{final_evalue} retains the sharpness of the p-CP construction while operating within the e-value framework.



To illustrate this point, we conduct a small experiment comparing the prediction sets in \eqref{appendix:e-CP_set_rewrite} and \eqref{appendix:split_CPset} on both regression and classification datasets from OpenML. We consider three base models: Linear Regression, Random Forest (RF), and a Multi-Layer Perceptron (MLP). Each dataset is split into 60\% for training, 20\% for calibration, and 20\% for testing. We report the mean prediction set length and empirical coverage in Tables \ref{appendix:table_regression} and \ref{appendix:table_classification}. We observe that p-CP achieves coverage close to the nominal level, whereas e-CP(Balinsky) consistently over-covers and produces substantially larger prediction sets.

\begin{table}[h]
\centering
\begin{tabular}{llcccc}
\toprule
\multirow{2}{*}{Dataset} & \multirow{2}{*}{Model} & \multicolumn{2}{c}{P-CP} & \multicolumn{2}{c}{E-CP(Balinsky)} \\
\cmidrule(lr){3-4}\cmidrule(lr){5-6}
 &  & Cov & Len & Cov & Len \\
\midrule
boston     & Linear & $0.884 \pm 0.055$ & $14.495 \pm 3.083$ & $1.000 \pm 0.000$ & $74.592 \pm 7.153$ \\
boston     & MLP    & $0.909 \pm 0.037$ & $12.678 \pm 1.568$ & $1.000 \pm 0.000$ & $66.596 \pm 7.998$ \\
boston     & RF     & $0.914 \pm 0.020$ & $10.613 \pm 0.805$ & $0.997 \pm 0.006$ & $50.816 \pm 3.965$ \\
\addlinespace
california & Linear & $0.901 \pm 0.005$ & $2.194 \pm 0.043$  & $1.000 \pm 0.000$ & $10.717 \pm 0.103$ \\
california & MLP    & $0.901 \pm 0.007$ & $1.613 \pm 0.034$  & $1.000 \pm 0.000$ & $7.282 \pm 0.166$ \\
california & RF     & $0.899 \pm 0.007$ & $1.562 \pm 0.051$  & $1.000 \pm 0.000$ & $6.746 \pm 0.153$ \\
\addlinespace
kin8nm     & Linear & $0.904 \pm 0.011$ & $0.651 \pm 0.012$  & $1.000 \pm 0.000$ & $3.283 \pm 0.063$ \\
kin8nm     & MLP    & $0.908 \pm 0.004$ & $0.269 \pm 0.006$  & $1.000 \pm 0.000$ & $1.284 \pm 0.025$ \\
kin8nm     & RF     & $0.904 \pm 0.016$ & $0.478 \pm 0.014$  & $1.000 \pm 0.000$ & $2.330 \pm 0.059$ \\
\bottomrule
\end{tabular}
\caption{Empirical mean prediction set size and coverage for 3 regression datasets (across 10 seeds, $\alpha=0.1$).}
\label{appendix:table_regression}
\end{table}

\begin{table}[h]
\centering
\begin{tabular}{llcccc}
\toprule
\multirow{2}{*}{Dataset} & \multirow{2}{*}{Model} & \multicolumn{2}{c}{P-CP} & \multicolumn{2}{c}{E-CP(Balinsky)} \\
\cmidrule(lr){3-4}\cmidrule(lr){5-6}
 &  & Cov & Len & Cov & Len \\
\midrule
breast\_cancer & Linear & $0.924 \pm 0.030$ & $0.93 \pm 0.03$ & $0.986 \pm 0.010$ & $1.01 \pm 0.02$ \\
breast\_cancer & MLP    & $0.911 \pm 0.037$ & $0.94 \pm 0.05$ & $1.000 \pm 0.000$ & $1.91 \pm 0.26$ \\
breast\_cancer & RF     & $0.916 \pm 0.035$ & $0.94 \pm 0.04$ & $0.999 \pm 0.003$ & $1.51 \pm 0.40$ \\
\addlinespace
cifar10       & Linear & $0.896 \pm 0.013$ & $7.27 \pm 0.14$ & $1.000 \pm 0.000$ & $10.00 \pm 0.00$ \\
cifar10       & MLP    & $0.902 \pm 0.010$ & $5.75 \pm 0.19$ & $1.000 \pm 0.000$ & $10.00 \pm 0.00$ \\
cifar10       & RF     & $0.893 \pm 0.012$ & $5.09 \pm 0.14$ & $1.000 \pm 0.000$ & $10.00 \pm 0.00$ \\
\addlinespace
mnist\_784    & Linear & $0.906 \pm 0.005$ & $1.07 \pm 0.03$ & $1.000 \pm 0.000$ & $10.00 \pm 0.00$ \\
mnist\_784    & MLP    & $0.903 \pm 0.008$ & $0.97 \pm 0.01$ & $1.000 \pm 0.000$ & $10.00 \pm 0.00$ \\
mnist\_784    & RF     & $0.900 \pm 0.008$ & $0.95 \pm 0.01$ & $1.000 \pm 0.000$ & $10.00 \pm 0.00$ \\
\bottomrule
\end{tabular}
\caption{Empirical mean prediction set size and coverage for 3 classification datasets (across 10 seeds, $\alpha=0.1$). }
\label{appendix:table_classification}
\end{table}

The following proposition characterizes the asymptotic ratio between the lengths of the e-CP(Balinsky) and p-CP prediction sets.

\begin{proposition}

Assume the absolute residual calibration scores are i.i.d. and nonnegative, with
$\mu := \mathbb{E}[s_1] \in (0, \infty)$ and $\sigma^2 := \operatorname{Var}(s_1) < \infty$.
Let $F$ denote their cdf, and assume $F$ is continuous at
$q := F^{-1}(1-\alpha)$ with density $f(q) > 0$.
Then, as $n \to \infty$,
\[
\frac{|\mathcal{C}^{\mathbf{ev}}(X_{n+1})|}{|\mathcal{C}^{\mathbf{pv}}(X_{n+1})|}=\frac{S/((n+1)\alpha-1)}{s_{(k)}}\ \longrightarrow\ \frac{\mu}{\alpha q}\ \ge\ 1.
\]
\end{proposition}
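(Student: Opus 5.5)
Write the ratio as a product of three factors,
\[
\frac{S/((n+1)\alpha-1)}{s_{(k)}} \;=\; \frac{S}{n}\cdot\frac{n}{(n+1)\alpha-1}\cdot\frac{1}{s_{(k)}},
\]
and treat each factor separately before combining them with the continuous mapping theorem (or Slutsky's lemma). First I would note that in the regression setting with absolute residual scores both sets are intervals centred at $\hat\mu(X_{n+1})$. The e-CP set has half-width $S/((n+1)\alpha-1)$ and the p-CP set has half-width $s_{(k)}$, so the ratio of the lengths equals the ratio of these thresholds. This gives the first equality in the statement, up to the boundary convention (strict versus non-strict inequality), which does not change the Lebesgue measure.

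For the first factor, the strong law of large numbers applied to the i.i.d. scores with finite mean gives $S/n\to\mu$ almost surely. The finite variance $\sigma^2$ is not needed for this step; it would only matter for a rate. The second factor is deterministic, and $n/((n+1)\alpha-1)\to 1/\alpha$ is immediate.

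The main step is the third factor, namely the consistency of the empirical quantile: $s_{(k)}\to q$ with $k=\lceil(1-\alpha)(n+1)\rceil$, so $k/n\to 1-\alpha$. I would prove this directly from the Glivenko--Cantelli theorem, or by a binomial tail argument. For $\varepsilon>0$, the event $s_{(k)}>q+\varepsilon$ means that fewer than $k$ scores are $\le q+\varepsilon$. Because $f(q)>0$ and $F$ is continuous at $q$, we have $F(q+\varepsilon)>1-\alpha$ strictly, so Hoeffding's inequality makes the probability of this event exponentially small. The same argument with $F(q-\varepsilon)<1-\alpha$ controls the lower side. The exponential bounds are summable, so Borel--Cantelli gives almost sure convergence. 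The positive density at $q$ is exactly what makes $q$ the unique $(1-\alpha)$-quantile, and this is the point that needs care. Since $q>0$, because $F(0)\le F(q)$ and positive density at $q$ with $1-\alpha<1$ forces $q>0$ when the scores are nonnegative, the map $x\mapsto 1/x$ is continuous at $q$.

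Combining the three limits gives the limit $\mu/(\alpha q)$. For the inequality $\mu\ge \alpha q$, I would apply Markov's inequality to the nonnegative score: $\alpha=\mathbb P(s_1>q)\le \mu/q$, and the equality $\mathbb P(s_1>q)=\alpha$ uses continuity of $F$ at $q$. Alternatively, this bound can be obtained as the limit of the finite-sample inequality $S/(\alpha(n+1)-1)>s_{(k)}$ already established above.
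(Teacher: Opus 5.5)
Your proposal is correct and follows essentially the same route as the paper: the strong law of large numbers for $S/n$, the deterministic limit $((n+1)\alpha-1)/n\to\alpha$, almost-sure consistency of $s_{(k)}$ for the unique quantile $q$, the continuous mapping theorem, and Markov's inequality with $\mathbb{P}(s_1> q)=\alpha$ for $\mu\ge\alpha q$. You also supply details the paper leaves implicit: the interval-length identity, a Hoeffding--Borel--Cantelli proof of quantile consistency, and $q>0$ so that $x\mapsto 1/x$ is continuous at $q$. One small correction on that last point: $q>0$ holds because $F(q)=1-\alpha>0$ while $F$ vanishes on $(-\infty,0)$ and is continuous at $q$, not because $1-\alpha<1$.
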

\begin{proof}
Let $S=\sum_{i=1}^n s_i$ and $s_{(k)}$ be the $k$th order statistic with $k=\lceil (n+1)(1-\alpha)\rceil$. By the strong law of large numbers, $S/n\to\mu$, and $((n+1)\alpha-1)/n\to \alpha$, hence $S/((n+1)\alpha-1)\to \mu/\alpha$ a.s. By Glivenko--Cantelli and continuity of $F$ at $q:=F^{-1}(1-\alpha)$ with $f(q)>0$, we have $s_{(k)}\to q$ a.s. The continuous mapping theorem yields $\frac{S/((n+1)\alpha-1)}{s_{(k)}}\ \xrightarrow{\text{a.s.}}\ \frac{\mu}{\alpha q}.$ This ratio is greater than one because
$\mathbb{E}[s_1]\ge t\,\mathbb{P}(s_1\ge t)$ by MI, and by continuity at $q$, $\mathbb{P}(s_1\ge q)=\alpha,$ so $  \mu=\mathbb{E}[s_1]\ge q\,\alpha$.
\end{proof}

Our empirical findings are consistent with the theoretical result above. For instance, for the Linear Regression model, the \emph{California housing} dataset (Table~\ref{appendix:table_regression}), the ratio of the average interval lengths between the two sets is $10.717/2.194 \approx4.884$. Figure~\ref{appendix:figure_calib°distribution} shows the empirical distribution of the calibration scores for Linear Regression model, together with the estimated mean and $(1-\alpha)$-quantile. We observe that
\[
\frac{\hat{\mu}}{\alpha \hat{q}} = \frac{0.535}{0.1\times1.094} \approx 4.885,
\]
which matches the theoretical prediction and explains the substantial over-coverage.




\begin{figure}
    \centering
    \includegraphics[width=0.75\linewidth]{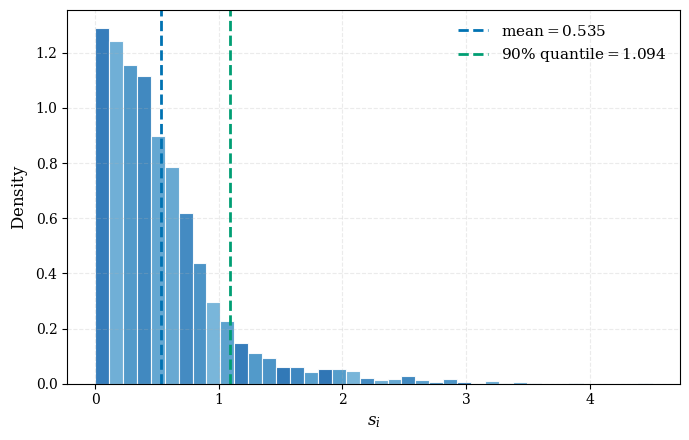}
    \caption{Empirical distribution of calibration scores  (California Housing) for Linear Regression model.}
    \label{appendix:figure_calib°distribution}
\end{figure}

\clearpage
\section{Proofs of results}

We assume in all the following that $\alpha(n+1)\notin \mathbb{N}$ and $\alpha(n+1)>1.$

\label{proffs_appendix}
\subsection{Set-inflation of p-to-e calibrators}

P-values and e-values can be converted into one another by means of \emph{p-to-e calibrators} \citep{vovk2021values,evaluespoly} :

\begin{definition}
A p-to-e calibrator is a decreasing function 
\( f : [0, \infty) \to [0, \infty] \) such that, for any p-variable \(P\) for any hypothesis \(\mathcal H\),
the random variable \(f(P)\) is an e-variable for \(\mathcal H\).
\end{definition}


A useful characterization of the class of p-to-e calibrators is given in the following proposition.
\begin{proposition}
\label{prop:ptoe_carac}
 A decreasing function \( f : [0, \infty) \to [0, \infty] \) with \( f = 0 \) on \( (1, \infty) \)
is a p-to-e calibrator if and only if $\int_0^1 f(p)\,dp \le 1.$
\end{proposition}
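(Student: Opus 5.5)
Both directions reduce to the uniform p-variable, where the bound $\int_0^1 f \le 1$ is exactly the expectation. Throughout we use that $f$ is decreasing, nonnegative and vanishes on $(1,\infty)$.

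\emph{Sufficiency.} Assume $\int_0^1 f(p)\,dp\le 1$, and let $P$ be any p-variable for any $\mathcal H$, so $\mathbb{P}(P\le t)\le t$ for all $t\in[0,1]$. We want $\mathbb{E}[f(P)]\le 1$, and we get it by comparing $P$ with a uniform variable $U$.

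1. The validity condition says $P$ stochastically dominates $U\sim\mathrm{Unif}(0,1)$: $\mathbb{P}(P\le t)\le \mathbb{P}(U\le t)$ for every $t$.
2. Since $f$ is decreasing, $f(P)$ is stochastically dominated by $f(U)$.
3. To make step 2 concrete, use the layer-cake formula $\mathbb{E}[f(P)]=\int_0^\infty \mathbb{P}(f(P)>x)\,dx$.
4. For each $x$, the set $\{p: f(p)>x\}$ is an interval starting at $0$, say $[0,a_x)$ or $[0,a_x]$, because $f$ is decreasing.
5. Hence $\mathbb{P}(f(P)>x)\le \mathbb{P}(P\le a_x)\le a_x$, using validity of $P$. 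Values $P>1$ contribute nothing since $f=0$ there.
6. For the uniform variable, $a_x=\mathrm{Leb}\{p\in[0,1]: f(p)>x\}$, and this interval has at most one extra endpoint of measure zero, so $a_x=\mathbb{P}(f(U)>x)$.
7. Integrating over $x$ gives $\mathbb{E}[f(P)]\le \int_0^\infty \mathrm{Leb}\{f>x\}\,dx=\int_0^1 f(p)\,dp\le 1$.

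If $f(0)=\infty$, nothing changes: the single point $p=0$ has Lebesgue measure zero and $\mathbb{P}(P=0)=0$. This disposes of the apparent problem at the atom.

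\emph{Necessity.} Suppose $f$ is a p-to-e calibrator. Take a hypothesis under which $U\sim\mathrm{Unif}(0,1)$; then $U$ is a valid p-variable. By assumption $f(U)$ is an e-variable, so $\mathbb{E}[f(U)]=\int_0^1 f(p)\,dp\le 1$.

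\emph{Main obstacle.} The only delicate point is the measure-theoretic bookkeeping in the stochastic-dominance step. It must handle possibly infinite values of $f$ near $0$, possible discontinuities of $f$, and whether the superlevel sets are open or closed intervals. The layer-cake argument above handles all three cleanly, so I expect this step to go through with the care just described.
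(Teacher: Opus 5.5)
The paper does not prove this proposition. It imports the characterization from the calibration literature \citep{vovk2021values} and uses it only as a black box in the proof of Proposition~\ref{prop:only_aon_set_preserving}.

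Your argument is correct and is the standard one for this result. Necessity follows by plugging in a uniform p-variable, which is legitimate because the definition quantifies over all hypotheses. Sufficiency follows from first-order stochastic dominance of $P$ over $U$ together with monotonicity of $f$, made rigorous via the layer-cake formula. The only detail you leave implicit is $\mathbb{P}(P\le 0)=0$, since the paper assumes validity only for $\delta\in(0,1)$; it follows by continuity from above.
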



However, we show that (almost) no p-to-e calibrator is set-preserving:

\paragraph{Proof of Proposition \ref{prop:only_aon_set_preserving}}

\begin{proof}
    
This comes from the fact that  \( f(p) \le 1/p \) for all \( p \) when \( f \) is a p-to-e calibrator. Indeed, if $\exists p^*\in(0,1)$ such that $f(p^*)>1/p^*$, then \(\int_0^1 f(q)\,dq \ge p^* f(p^*) > 1\), thus violating the condition in Proposition~\ref{prop:ptoe_carac}. Therefore, for any  p-to-e calibrator $f$,
$$\mathcal{C}^{\textbf{pv}}(X_{n+1}) \subseteq \{\, f(P_n) < 1/\alpha \,\}.$$
  Thus,

  $$|\mathcal{C}^{\textbf{pv}}(X_{n+1})|\le |\mathcal{C}^{\textbf{ev}}(X_{n+1})| $$
for any p-to-e calibrator $f$. 
Moreover, it is straightforward to see that $F_{\text{AoN}}$ is set-preserving, by definition of this calibrator.

Now to prove that only the AoN calibrator is set-preserving, assume that \(F\) is a p-to-e calibrator, left-continuous at \(\alpha\), and set-preserving for every nonconformity score used to define \(P_n\), i.e.
\[
\{y:P_n(y)>\alpha\}=\{y:F(P_n(y))<1/\alpha\}
\]
for all \(n\), and all score functions. Since \(F\) is a p-to-e calibrator, \(F(p)\le 1/p\), hence \(F(\alpha)\le 1/\alpha\). Suppose \(F(\alpha)<1/\alpha\). By left-continuity, choose \(\varepsilon>0\) such that \(F(p)<1/\alpha\) for \(p\in(\alpha-\varepsilon,\alpha]\). Pick \(N=n+1\) large with \(q_N=\lfloor \alpha N\rfloor/N\in(\alpha-\varepsilon,\alpha]\). Because set-preservation is assumed for every score, consider the absolute-residual score \(s(x,y)=|y-\mu(x)|\), assuming distinct calibration scores; choosing \(y_0\) so that exactly \(\lfloor\alpha N\rfloor-1\) calibration scores at least as large as \(s(X_{n+1},y_0)\) gives \(P_n(y_0)=q_N\). Then \(q_N\le\alpha\), so \(y_0\notin\{P_n>\alpha\}\), but \(F(P_n(y_0))<1/\alpha\), so \(y_0\in\{F(P_n)<1/\alpha\}\), contradiction. Thus \(F(\alpha)=1/\alpha\). Since \(F\) is decreasing, \(F(p)\ge1/\alpha\) for \(p\le\alpha\), and \(\int_0^1F\le1\) forces \(F(p)=1/\alpha\) on \((0,\alpha]\) and \(F(p)=0\) on \((\alpha,1]\). Therefore \(F=F_{\mathrm{AoN}}\).

\end{proof}

\subsection{Proof of Proposition \ref{prop:fnalpha}}

\begin{proof}
Under the null and assuming no ties among the nonconformity scores, the p-variable is uniformly distributed on $\{\frac{1}{n+1},\cdots, \frac{n}{n+1}, 1\}$. Thus, for $f\in\mathcal{E}$, condition \eqref{E(F)=1} is equivalent to 
\begin{equation}
\label{starprop}
    \frac{1}{n+1}\sum_{k=1}^{n+1}f\left(\frac{k}{n+1}\right) = \alpha f(\alpha).  \tag{$\star$}
\end{equation}

 We will need the following lemma to complete the proof.
\begin{lemma}\label{lem:balance}
Let $V\in\mathbb{R}^d$ have at least one positive and at least one negative coordinate.
Then there exists $x\in\mathbb{R}^d_{>0}$ such that $V\cdot x=0$.
\end{lemma}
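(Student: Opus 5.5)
The plan is to prove Lemma~\ref{lem:balance} by an explicit construction that uses only positive linear combinations. First I will split the index set $\{1,\dots,d\}$ into $I_+=\{i:V_i>0\}$, $I_-=\{i:V_i<0\}$ and $I_0=\{i:V_i=0\}$. By hypothesis $I_+$ and $I_-$ are both nonempty. On $I_0$ I set $x_i=1$; these coordinates contribute nothing to $V\cdot x$.

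On $I_+$ and $I_-$ I will use a two-level ansatz: $x_i=a$ for $i\in I_+$ and $x_i=b$ for $i\in I_-$, with $a,b>0$ still to be chosen. Writing $A:=\sum_{i\in I_+}V_i>0$ and $B:=-\sum_{i\in I_-}V_i>0$, the condition $V\cdot x=0$ becomes $aA-bB=0$. Choosing $a=B$ and $b=A$ satisfies this, and both are strictly positive, so $x\in\mathbb{R}^d_{>0}$ and $V\cdot x=0$.

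The only point that needs care is strict positivity of every coordinate, including those in $I_0$, which the constant choice handles. There is no genuine obstacle here. Alternatively, one can argue by continuity: take $x=(1,\dots,1)+t\,\mathbf{1}_{I_\pm}$, note that the sign of $V\cdot x$ can be flipped by letting $t\to\infty$ on one side, and apply the intermediate value theorem. I will present the explicit construction because it is shorter.

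Although the statement asked for is only the lemma, I will also record how it feeds into Proposition~\ref{prop:fnalpha}. Parametrize $f\in\mathcal{E}$ by its values on the grid $k/(n+1)$ and at $\alpha$. Because $\alpha(n+1)\notin\mathbb{N}$, the point $\alpha$ lies strictly between two grid points, so $(\star)$ becomes a linear constraint $V\cdot x=0$ on the vector of increments $x$ of $f$, with $f$ required to be strictly decreasing. A representation of this kind is what makes the lemma applicable. The sign pattern of $V$ has a positive entry because $\alpha(n+1)>1$ guarantees that some grid points lie below $\alpha$. It has a negative entry because the weight $\alpha$ is less than $1$. That bookkeeping belongs to the proposition's proof, not to the lemma.
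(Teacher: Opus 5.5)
Your proof is correct, but your main construction differs from the paper's. The paper starts from an arbitrary $x^{(0)}\in\mathbb{R}^d_{>0}$. If $V\cdot x^{(0)}\neq 0$, it increases one coordinate $i$, with $V_i$ of sign opposite to $V\cdot x^{(0)}$, by $t=|V\cdot x^{(0)}|/|V_i|$. This lands exactly on $\{V\cdot x=0\}$ and keeps positivity. Your continuity variant (pushing the all-ones vector along $\mathbf{1}_{I_-}$ or $\mathbf{1}_{I_+}$) is essentially that argument, with the intermediate value theorem in place of an explicit linear solve. Your main construction instead balances the total positive and negative masses ($a=B$, $b=A$). It gives a closed-form solution with no case split on the sign of $V\cdot x^{(0)}$.

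What the paper's version buys is flexibility: any positive vector can be moved onto the hyperplane by changing one coordinate, so all other coordinates can be fixed freely. In Proposition~\ref{prop:fnalpha} (with $N=n+1$, $j=\lfloor\alpha N\rfloor$), the only negative coefficient is $\beta_{j+1}=j-\alpha N$. So when $V\cdot x^{(0)}>0$, the correction just enlarges the drop $f(\alpha)-f(\tfrac{j+1}{N})$ of an arbitrary strictly decreasing profile. That is the same idea the sigmoid of Theorem~\ref{main_theorem} uses, by steepening around its center in $(\alpha,\tfrac{j+1}{N})$. Applied there, your construction picks out one specific profile: $f(1)$ and every step equal $\alpha N-j$, except one large drop $\delta_{j+1}$ just after $\alpha$. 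That is enough for existence.

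One correction to your closing aside, which does not affect the lemma. You attribute the negative entry of $V$ to $\alpha<1$, but in the paper's parametrization ($c=f(1)$ plus positive increments $\delta_m$) the roles are different:
\begin{itemize}
\item $\alpha<1$ is what makes $\beta_0=N(1-\alpha)$ \emph{positive};
\item the unique negative coefficient $\beta_{j+1}=j-\alpha N$ is negative because $\alpha N\notin\mathbb{N}$, i.e.\ $j<\alpha N$;
\item the hypothesis $\alpha(n+1)>1$ is not needed for the sign pattern, since $\beta_0>0$ already supplies a positive entry.
\end{itemize}
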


\begin{proof}
Take any $x^{(0)}\in\mathbb{R}^d_{>0}$  and set $s:=V\cdot x^{(0)}$.
If $s=0$, the claim holds. If \(s > 0\), choose an index \(i\) such that \(V_i < 0\) and define \(t := s / (-V_i) > 0\). Then \(x := x^{(0)} + t e_i \in \mathbb{R}^d_{>0}\), and $V \cdot x = s + t V_i = s - s = 0$. If \(s < 0\), the same argument applies with an index \(j\) satisfying \(V_j > 0\).
\end{proof}

Let \(N:=n+1\) and \(j:=\lfloor \alpha N\rfloor\), so that
\[
\frac{j}{N}<\alpha<\frac{j+1}{N}.
\]
Define the augmented grid
\[
y_1=\frac1N,\ldots,y_j=\frac jN,\quad
y_{j+1}=\alpha,\quad
y_{j+2}=\frac{j+1}{N},\ldots,y_{N+1}=1,
\]
and set \(u_k=f(y_k)\). Let
\[
\delta_m:=u_m-u_{m+1},\qquad m=1,\ldots,N,
\qquad
c:=u_{N+1}.
\]
Then
\[
u_k=c+\sum_{m=k}^{N}\delta_m,\qquad k=1,\ldots,N.
\]
Since the point \(y_{j+1}=\alpha\) is not part of the conformal grid,
condition \((\star)\) is equivalent to
\[
\frac1N
\left(
\sum_{k=1}^{j}u_k+\sum_{k=j+2}^{N+1}u_k
\right)
=
\alpha u_{j+1}.
\]
Substituting the representation of the \(u_k\)'s gives after simplification
\[
N(1-\alpha)c
+
\sum_{m=1}^{j}m\delta_m
+
(j-\alpha N)\delta_{j+1}
+
\sum_{m=j+2}^{N}(m-1-\alpha N)\delta_m
=
0.
\]
Define
\[
\beta_0:=N(1-\alpha),\qquad
\beta_m:=m,\quad m=1,\ldots,j,
\]
\[
\beta_{j+1}:=j-\alpha N<0,
\qquad
\beta_m:=m-1-\alpha N,\quad m=j+2,\ldots,N.
\]
Then \(\beta_0>0\), \(\beta_{j+1}<0\), and
\(\beta_m>0\) for all \(m\neq j+1\). By Lemma~\ref{lem:balance},
there exists
\[
(c,\delta_1,\ldots,\delta_N)\in\mathbb R_{>0}^{N+1}
\]
such that
\[
\beta_0 c+\sum_{m=1}^{N}\beta_m\delta_m=0.
\]
Thus \(u_1>\cdots>u_{N+1}>0\). Finally, choose any
\(u_0>u_1\) at \(y_0=0\), and define \(f\) by linear interpolation
through
\[
(y_0,u_0),(y_1,u_1),\ldots,(y_{N+1},u_{N+1}).
\]
Then \(f\) is positive and strictly decreasing on \([0,1]\), and by
construction it satisfies \((\star)\).

\end{proof}


\begin{remark} When $\alpha=\tfrac{j}{n+1}$ with $1\le j\le n$, no solution exists. Indeed, writing $u_k:=f\!\Big(\frac{k}{n+1}\Big),$ and $
u_1>\cdots>u_{n+1}>0,$
equation \eqref{starprop} becomes 
\begin{equation}\label{eq:on-grid}
\sum_{k=1}^{n+1} u_k \;=\; j\,u_j
\quad\Longleftrightarrow\quad
\sum_{k\neq j} u_k \;=\; (j-1)\,u_j .
\end{equation}
But strict monotonicity implies $\sum_{k=1}^{j-1} u_k \;>\; (j-1)\,u_j$ and $
\sum_{k=j+1}^{n+1} u_k \;>\; 0, $ thus leading to a contradiction.
\end{remark}
\paragraph{Proof of Proposition \ref{prop:convergence_AON} }

Let \(f_n\in\mathcal E\) be positive and strictly decreasing, satisfying
\[
\frac{1}{n+1}\sum_{k=1}^{n+1}
f_n\!\left(\frac{k}{n+1}\right)
=
\alpha f_n(\alpha).
\tag{\(\star\)}
\]
Then
\[
\frac{f_n(p)}{f_n(\alpha)}
\longrightarrow
\mathbf 1_{\{p\le \alpha\}}
\]
for every fixed \(p\in(0,1]\).

\begin{proof}
By definition we have
\[
0\le \alpha(n+1)-\lfloor \alpha(n+1)\rfloor<1.
\]

Multiplying \((\star)\) by \(n+1\), we get
\[
\sum_{k=1}^{n+1}
f_n\!\left(\frac{k}{n+1}\right)
=
\alpha(n+1)f_n(\alpha).
\]
Subtracting \(\lfloor \alpha(n+1)\rfloor f_n(\alpha)\) from both sides gives
\[
\sum_{k=1}^{\lfloor \alpha(n+1)\rfloor}
\left[
f_n\!\left(\frac{k}{n+1}\right)-f_n(\alpha)
\right]
+
\sum_{k=\lfloor \alpha(n+1)\rfloor+1}^{n+1}
f_n\!\left(\frac{k}{n+1}\right)
=
\bigl(\alpha(n+1)-\lfloor \alpha(n+1)\rfloor\bigr)f_n(\alpha).
\tag{1}
\]
Both terms on the left-hand side are nonnegative. Indeed, if
\(k\le \lfloor \alpha(n+1)\rfloor\), then \(k/(n+1)\le \alpha\), so by monotonicity,
$f_n\!\left(\frac{k}{n+1}\right)\ge f_n(\alpha).$

\begin{itemize}
    \item Now fix \(p<\alpha\). We prove that
    \[
    \frac{f_n(p)}{f_n(\alpha)}\to 1.
    \]
    Since \(f_n\) is decreasing and \(p<\alpha\), we already have
    $\frac{f_n(p)}{f_n(\alpha)}\ge 1.$
    For every integer \(k\le \lfloor p(n+1)\rfloor\), we have
    \[
    \frac{k}{n+1}\le p<\alpha
    \implies
    f_n\!\left(\frac{k}{n+1}\right)\ge f_n(p).
    \]
    Therefore,
    \[
    f_n\!\left(\frac{k}{n+1}\right)-f_n(\alpha)
    \ge
    f_n(p)-f_n(\alpha).
    \]
    Summing over \(k=1,\dots,\lfloor p(n+1)\rfloor\), we get
    \[
    \sum_{k=1}^{\lfloor p(n+1)\rfloor}
    \left[
    f_n\!\left(\frac{k}{n+1}\right)-f_n(\alpha)
    \right]
    \ge
    \lfloor p(n+1)\rfloor
    \left[
    f_n(p)-f_n(\alpha)
    \right].
    \]
    The left-hand side is bounded above by the first sum in (1), and the
    first sum in (1) is bounded above by the right-hand side of (1).
    Hence
    \[
    \lfloor p(n+1)\rfloor
    \left[
    f_n(p)-f_n(\alpha)
    \right]
    \le
    \bigl(\alpha(n+1)-\lfloor \alpha(n+1)\rfloor\bigr)f_n(\alpha).
    \]
    Since
    \[
    0\le \alpha(n+1)-\lfloor \alpha(n+1)\rfloor<1,
    \]
    we obtain
    \[
    0
    \le
    \frac{f_n(p)}{f_n(\alpha)}-1
    \le
    \frac{1}{\lfloor p(n+1)\rfloor}.
    \]
    Now considering an  increasing sequence $n_k$, with $\alpha(n_k+1)\notin \mathbb{N},$ with $ n_k\to\infty, k\to \infty$, the right-hand side goes to \(0\). Therefore,
    \[
    \frac{f_n(p)}{f_n(\alpha)}\to 1.
    \]

    \item Fix \(p>\alpha\). We now prove that
    $\frac{f_n(p)}{f_n(\alpha)}\to 0$.
    For a sufficiently large $n$, consider \(k\) s.t. $$\alpha<\frac{k}{n+1}\le p\implies
    f_n\!\left(\frac{k}{n+1}\right)\ge f_n(p).$$
    Therefore,
    \[
    \sum_{k=\lfloor \alpha(n+1)\rfloor+1}^{\lfloor p(n+1)\rfloor}
    f_n\!\left(\frac{k}{n+1}\right)
    \ge
    \left(
    \lfloor p(n+1)\rfloor-\lfloor \alpha(n+1)\rfloor
    \right)f_n(p).
    \]
    The left-hand side is bounded above by the second sum in (1), and the
    second sum in (1) is bounded above by the right-hand side of (1).
    Hence
    \[
    \left(
    \lfloor p(n+1)\rfloor-\lfloor \alpha(n+1)\rfloor
    \right)f_n(p)
    \le
    \bigl(\alpha(n+1)-\lfloor \alpha(n+1)\rfloor\bigr)f_n(\alpha).
    \]
    Since
    \[
    0\le \alpha(n+1)-\lfloor \alpha(n+1)\rfloor<1,
    \]
    we obtain
    \[
    0
    \le
    \frac{f_n(p)}{f_n(\alpha)}
    \le
    \frac{1}{
    \lfloor p(n+1)\rfloor-\lfloor \alpha(n+1)\rfloor
    }.
    \]
    Because \(p>\alpha\), we have
    $\lfloor p(n+1)\rfloor-\lfloor \alpha(n+1)\rfloor\to\infty.$
    Hence
    \[
    \frac{f_n(p)}{f_n(\alpha)}\to 0.
    \]

    \item Finally, if \(p=\alpha\), then trivially
    $\frac{f_n(\alpha)}{f_n(\alpha)}=1$.
\end{itemize}

Combining the three cases, we conclude that, for every fixed \(p\in(0,1]\),
\[
\frac{f_n(p)}{f_n(\alpha)}
\longrightarrow
\mathbf 1_{\{p\le \alpha\}},
\]
and thus $F_{n,\alpha}\to F_{\mathrm{AoN}}.$
\end{proof}

\subsection{Proof of Theorem \ref{main_theorem}}

\begin{proof}
Define
\[
\mathcal{L}(C,p)
:= \frac{1}{\alpha} \cdot 
\frac{1 + \exp\!\big(C(\alpha - s)\big)}
     {1 + \exp\!\big(C(p - s)\big)},
\qquad C > 0, \; p > 0.
\]
For each fixed \(C > 0\), the mapping \(p \mapsto \mathcal{L}(C,p)\) is strictly decreasing and positive. 
Hence, we seek \(C > 0\) such that
\begin{equation}\label{star}
\mathbb{E}_{\mathbb{H}_0}\!\left(E_{n,\alpha}\right) = 1
\quad \Longleftrightarrow \quad
\Sigma(C) 
:= \frac{1}{n+1} \sum_{k=1}^{n+1} 
\mathcal{L}\!\left(C, \tfrac{k}{n+1}\right)
= 1.
\end{equation}
The function \(\Sigma(C)\) is continuous on \((0,\infty)\). 
We examine its limiting behavior:

\smallskip
\noindent
\(\bullet\) As \(C \to 0^+\), we have 
\(\exp\!\big(C(\alpha - s)\big) \to 1\), and thus each term of the sum tends to \(1/\alpha\); consequently, 
\[
\lim_{C \to 0^+} \Sigma(C) = \tfrac{1}{\alpha} > 1.
\]

\smallskip
\noindent
\(\bullet\) As \(C \to \infty\), \(\exp\!\big(C(\alpha - s)\big) \to 0\), and
\[
\frac{1}{1 + \exp\!\big(C(\tfrac{k}{n+1} - s)\big)} 
\longrightarrow 
\begin{cases}
1, & \tfrac{k}{n+1} < s,\\[4pt]
0, & \tfrac{k}{n+1} > s.
\end{cases}
\]
Hence,
\[
\lim_{C \to \infty} \Sigma(C)
= \frac{1}{\alpha} \cdot 
\frac{\#\{\,k : \tfrac{k}{n+1} < s\,\}}{n+1}
= \frac{\lfloor \alpha (n+1) \rfloor}{\alpha (n+1)} < 1.
\]

\smallskip
\noindent
By continuity of \(\Sigma\), the Intermediate Value Theorem guarantees the existence of a 
\(C_{n,\alpha} > 0\) such that \(\Sigma(C_{n,\alpha}) = 1\).

\paragraph{Properties of our P2E calibrator.}
For fixed $n,\alpha$, our P2E calibrator
\[
F_{n,\alpha}(p)
:= \frac{1}{\alpha}\cdot
\frac{1+\exp\!\big(C_{n,\alpha}(\alpha-s)\big)}
     {1+\exp\!\big(C_{n,\alpha}(p-s)\big)}
\]
is smooth thanks to its sigmoid-like form. Noting that $C_{n,\alpha}>0$, it is strictly decreasing as a function of $p$, since
\[
\frac{\partial}{\partial p}F_{n,\alpha}(p)
=
-\frac{C_{n,\alpha}}{\alpha}\,
\frac{\big(1+\exp(C_{n,\alpha}(\alpha-s))\big)
      \exp(C_{n,\alpha}(p-s))}
     {\big(1+\exp(C_{n,\alpha}(p-s))\big)^2}
<0 .
\]
Moreover, it satisfies $F_{n,\alpha}(\alpha)=\frac{1}{\alpha}.$
Its inverse is obtained explicitly as follows. For $e=F_{n,\alpha}(p)$,
\[
p
=
F_{n,\alpha}^{-1}(e)
=
s+\frac{1}{C_{n,\alpha}}
\log\!\left(
\frac{1+\exp(C_{n,\alpha}(\alpha-s))}
     {\alpha e}
-1
\right),
\]
for $e$ in the range of $F_{n,\alpha}$.

\paragraph{Dominance Over AoN.}

For
\[
F_{n,\alpha}(p) = \frac{1}{\alpha} \frac{1 + \exp(C_{n,\alpha}(\alpha - s))}{1 + \exp(C_{n,\alpha}(p - s))}
\]
with $s > \alpha$:
\begin{itemize}
    \item If $p \le \alpha$, then $p - s \le \alpha - s$, so
    \[
    \exp(C(p - s)) \le \exp(C(\alpha - s)),
    \]
    hence the denominator is no larger than the numerator and
    \[
    F_{n,\alpha}(p) \ge 1/\alpha = F_{\text{AoN}}(p).
    \]
    \item If $p > \alpha$, then $F_{\text{AoN}}(p) = 0$, while $F_{n,\alpha}(p) > 0$.
\end{itemize}
Therefore $F_{n,\alpha} \ge F_{\text{AoN}}$ pointwise.

\end{proof}

\subsection{Proof sketch of Proposition \ref{prop:ECCP}}

It follows directly from the Markov-type bounds of Subsection \ref{subsec:evals_prop}. Applying MI~\eqref{Prop:markov_coverage} or UR-MI~\eqref{prop:URMI} 
then yields the stated finite-sample coverage guarantees. 

For the exchangeable constructions 
of ECCP-Exch and UR-ECCP-Exch,
the corresponding results follow by invoking the exchangeable 
Markov inequality (EMI) in~\eqref{prop:UR-EMI}.


\section{ECCP improves the stability of CCP}
\label{app:ECCP_RF}
Empirical studies confirm that CCP can either under-cover or remain overly conservative depending on the stability of the underlying model \citep{linusson2017calibration}. To illustrate this, Figure~\ref{fig:boston_rf_ccp_eccp} shows how coverage and prediction set size evolve with the number of trees in a Random Forest (RF) when using CCP and ECCP. With few trees, when the RF is unstable, CCP becomes markedly conservative, in line with \citet{linusson2017calibration}.  As the number of trees increases and the RF stabilizes, CCP's coverage moves closer to the nominal level. In contrast, ECCP consistently remains near the target coverage regardless of the stability of the RF model.

\begin{figure}
\centering
\begin{minipage}{0.49\columnwidth}
  \centering
  \includegraphics[width=\linewidth]{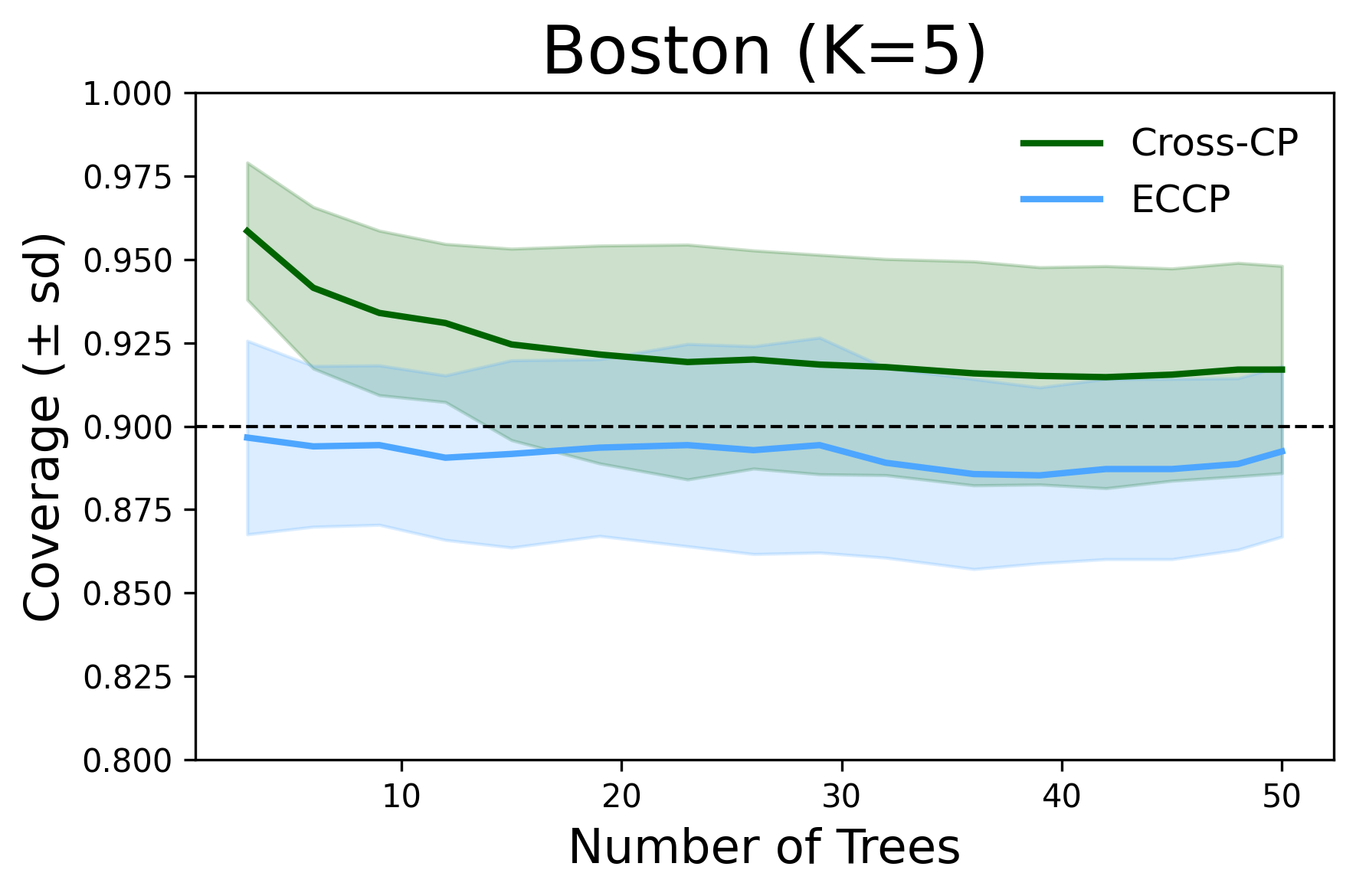}

\end{minipage}\hfill
\begin{minipage}{0.49\columnwidth}
  \centering
  \includegraphics[width=\linewidth]{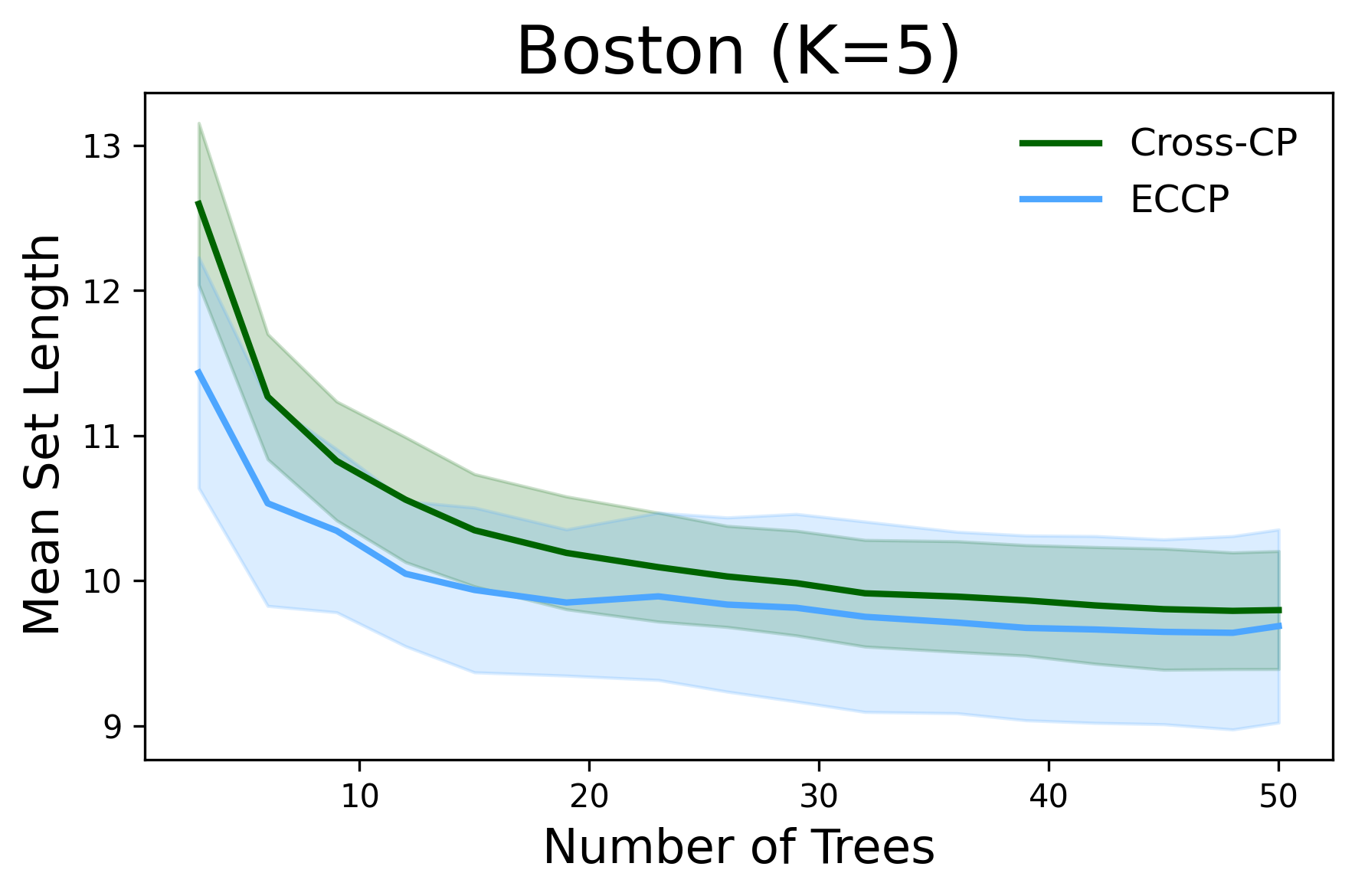}

\end{minipage}

\vspace{0.2em}

\begin{minipage}{0.49\columnwidth}
  \centering
  \includegraphics[width=\linewidth]{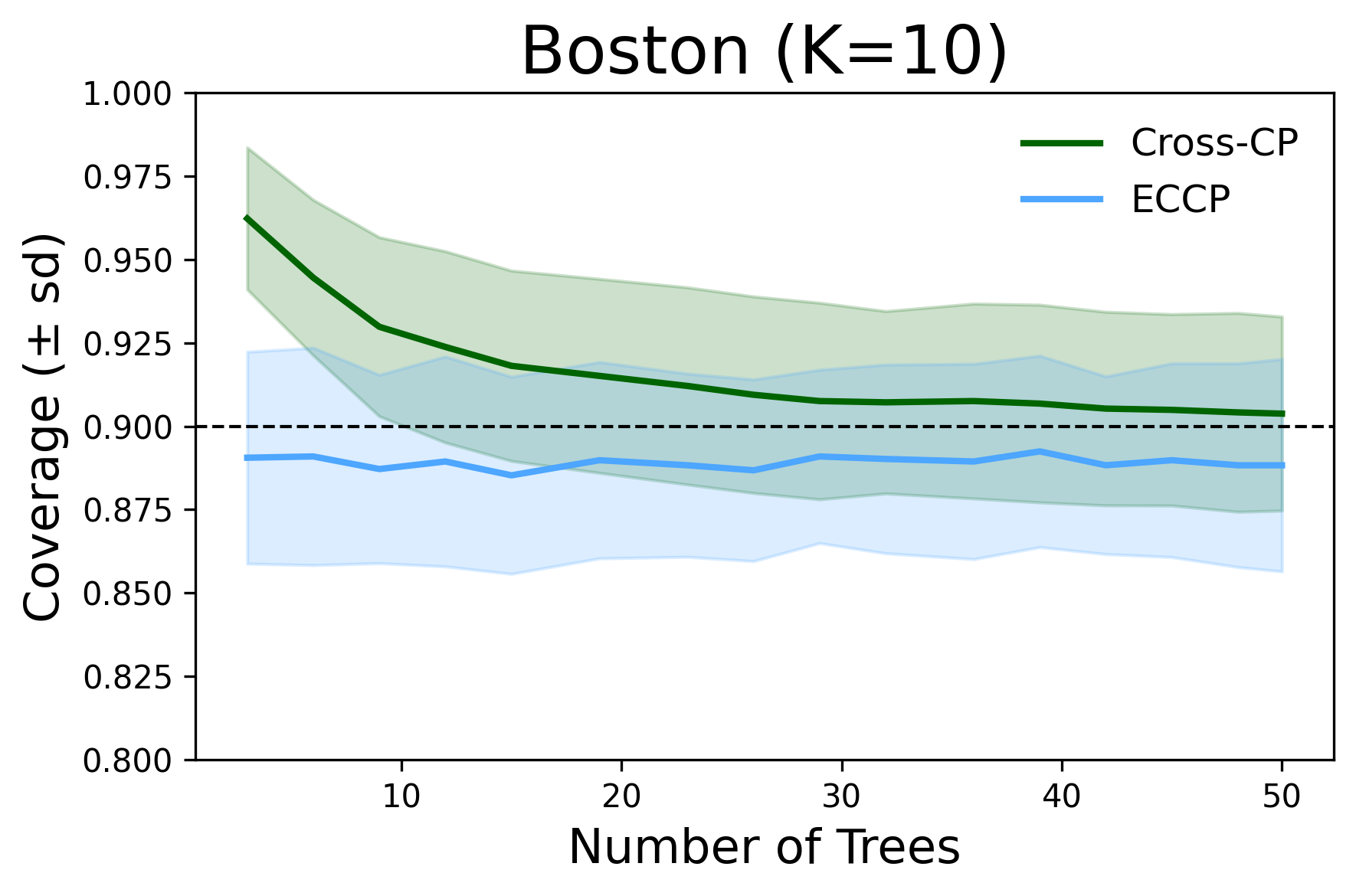}

\end{minipage}\hfill
\begin{minipage}{0.49\columnwidth}
  \centering
  \includegraphics[width=\linewidth]{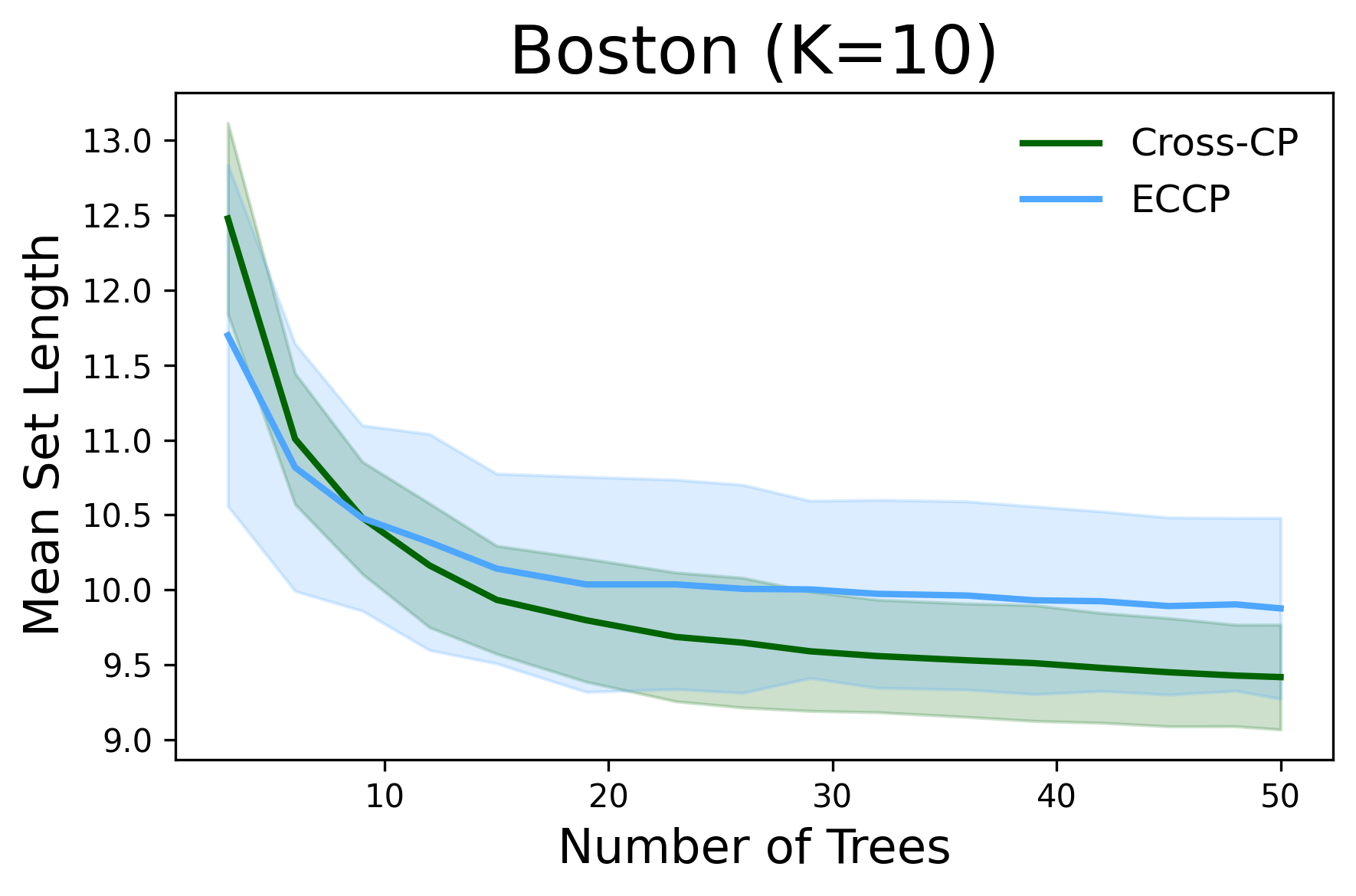}

\end{minipage}

\caption{Evolution of coverage and mean prediction set size on the Boston dataset as a function of the number of trees in the RF model, for both CCP (green) and ECCP (blue), averaged over 25 seeds. The surrounding shaded area indicates the range of mean $\pm$ standard deviation. Miscoverage level is set to $\alpha = 0.1$.}

\label{fig:boston_rf_ccp_eccp}
\end{figure}

\section{ECCP Additional experiments}
\label{app:additional_experiments}
\subsection{ECCP methods}
We provide additional experimental results for the CCP methods on Boston, Abalone, and Parkinson
\citep{tsanas2009parkinsons} datasets,  
for three base models (OLS, RF, and Lasso) as in \citep{gasparinimproving}. 

\begin{table}[H]
\centering
\caption{Comparison of prediction set size and coverage, Boston dataset, $K=15$, across 100 random seeds.}
\begin{tabular}{llccccc}
\toprule
Base & Metric & CCP & e-mod-cross & u-mod-cross & eu-mod-cross & ECCP$(2\alpha)$ \\
\midrule
OLS & Size & 13.970 $\pm$ 0.456 & 14.100 $\pm$ 1.720 & 13.288 $\pm$ 0.439 & 12.569 $\pm$ 1.955 & 10.665 $\pm$ 0.473 \\
 & Cov. & 0.897 $\pm$ 0.032 & 0.895 $\pm$ 0.055 & 0.880 $\pm$ 0.034 & 0.854 $\pm$ 0.067 & 0.793 $\pm$ 0.042 \\
\midrule
RF & Size & 9.140 $\pm$ 0.357 & 9.208 $\pm$ 1.450 & 8.615 $\pm$ 0.323 & 7.994 $\pm$ 1.618 & 6.549 $\pm$ 0.317 \\
 & Cov. & 0.902 $\pm$ 0.035 & 0.894 $\pm$ 0.049 & 0.882 $\pm$ 0.034 & 0.849 $\pm$ 0.070 & 0.784 $\pm$ 0.046 \\
\midrule
Lasso & Size & 13.926 $\pm$ 0.463 & 14.109 $\pm$ 1.736 & 13.268 $\pm$ 0.447 & 12.548 $\pm$ 2.006 & 10.632 $\pm$ 0.480 \\
 & Cov. & 0.898 $\pm$ 0.032 & 0.895 $\pm$ 0.053 & 0.881 $\pm$ 0.032 & 0.854 $\pm$ 0.066 & 0.792 $\pm$ 0.043 \\
\bottomrule
\end{tabular}
\end{table}

\begin{table}[H]
\centering

\begin{tabular}{llccccc}
\toprule
Base & Metric & ECCP & ECCP($F_{\text{AoN}}$) & ECCP($F_{1}$) & ECCP($F_2$) & ECCP($F_3$) \\
\midrule
OLS & Size & 15.458 $\pm$ 0.850 & 18.727 $\pm$ 1.300 & 65.724 $\pm$ 3.768 & 71.783 $\pm$ 3.746 & 63.206 $\pm$ 3.979 \\
 & Cov. & 0.905 $\pm$ 0.033 & 0.926 $\pm$ 0.029 & 0.926 $\pm$ 0.029 & 0.907 $\pm$ 0.031 & 0.908 $\pm$ 0.031 \\
\midrule
RF & Size & 10.126 $\pm$ 0.616 & 12.662 $\pm$ 1.153 & 63.487 $\pm$ 4.036 & 70.362 $\pm$ 3.932 & 61.264 $\pm$ 4.204 \\
 & Cov. & 0.899 $\pm$ 0.032 & 0.922 $\pm$ 0.027 & 0.921 $\pm$ 0.024 & 0.907 $\pm$ 0.028 & 0.903 $\pm$ 0.026 \\
\midrule
Lasso & Size & 15.479 $\pm$ 0.839 & 18.786 $\pm$ 1.309 & 65.746 $\pm$ 3.768 & 71.786 $\pm$ 3.745 & 63.204 $\pm$ 3.984 \\
 & Cov. & 0.903 $\pm$ 0.031 & 0.926 $\pm$ 0.028 & 0.926 $\pm$ 0.029 & 0.907 $\pm$ 0.032 & 0.908 $\pm$ 0.030 \\
\bottomrule
\end{tabular}
\end{table}


\begin{table}[H]
\centering
\caption{Comparison of prediction set size and coverage, Abalone dataset, $K=15$, across 100 random seeds.}
\begin{tabular}{llccccc}
\toprule
Base & Metric & CCP & e-mod-cross & u-mod-cross & eu-mod-cross & ECCP$(2\alpha)$ \\
\midrule
OLS & Size & 6.603 $\pm$ 0.034 & 6.365 $\pm$ 0.300 & 5.477 $\pm$ 0.050 & 5.444 $\pm$ 0.324 & 4.600 $\pm$ 0.036 \\
 & Cov. & 0.898 $\pm$ 0.022 & 0.888 $\pm$ 0.027 & 0.844 $\pm$ 0.027 & 0.840 $\pm$ 0.034 & 0.787 $\pm$ 0.033 \\
\midrule
RF & Size & 6.696 $\pm$ 0.057 & 6.358 $\pm$ 0.282 & 5.547 $\pm$ 0.064 & 5.441 $\pm$ 0.399 & 4.557 $\pm$ 0.063 \\
 & Cov. & 0.901 $\pm$ 0.024 & 0.884 $\pm$ 0.029 & 0.848 $\pm$ 0.029 & 0.838 $\pm$ 0.038 & 0.783 $\pm$ 0.036 \\
\midrule
Lasso & Size & 6.513 $\pm$ 0.035 & 6.302 $\pm$ 0.255 & 5.497 $\pm$ 0.046 & 5.472 $\pm$ 0.309 & 4.659 $\pm$ 0.036 \\
 & Cov. & 0.900 $\pm$ 0.022 & 0.891 $\pm$ 0.025 & 0.846 $\pm$ 0.029 & 0.844 $\pm$ 0.034 & 0.790 $\pm$ 0.032 \\
\bottomrule
\end{tabular}
\end{table}

\begin{table}[H]
\centering

\begin{tabular}{llccccc}
\toprule
Base & Metric & ECCP & ECCP($F_{\text{AoN}}$) & ECCP($F_1$) & ECCP($F_2$) & ECCP($F_3$) \\
\midrule
OLS & Size & 6.716 $\pm$ 0.076 & 6.755 $\pm$ 0.080 & 10.350 $\pm$ 0.327 & 30.647 $\pm$ 1.956 & 32.181 $\pm$ 1.975 \\
 & Cov. & 0.898 $\pm$ 0.021 & 0.899 $\pm$ 0.021 & 0.901 $\pm$ 0.022 & 0.891 $\pm$ 0.024 & 0.891 $\pm$ 0.023 \\
\midrule
RF & Size & 6.747 $\pm$ 0.087 & 6.788 $\pm$ 0.086 & 10.294 $\pm$ 0.329 & 30.578 $\pm$ 1.958 & 32.133 $\pm$ 1.976 \\
 & Cov. & 0.897 $\pm$ 0.024 & 0.899 $\pm$ 0.024 & 0.904 $\pm$ 0.025 & 0.893 $\pm$ 0.026 & 0.893 $\pm$ 0.024 \\
\midrule
Lasso & Size & 6.665 $\pm$ 0.082 & 6.704 $\pm$ 0.083 & 10.670 $\pm$ 0.340 & 30.772 $\pm$ 1.946 & 32.233 $\pm$ 1.971 \\
 & Cov. & 0.899 $\pm$ 0.022 & 0.900 $\pm$ 0.022 & 0.902 $\pm$ 0.024 & 0.892 $\pm$ 0.024 & 0.891 $\pm$ 0.024 \\
\bottomrule
\end{tabular}
\end{table}

\begin{table}[H]
\centering
\caption{Comparison of prediction set size and coverage, Parkinsons dataset, $K=20$, across 100 random seeds.}
\begin{tabular}{llccccc}
\toprule
Base & Metric & CCP & e-mod-cross & u-mod-cross & eu-mod-cross & ECCP$(2\alpha)$ \\
\midrule
OLS & Size & 29.342 $\pm$ 0.429 & 28.462 $\pm$ 1.244 & 26.239 $\pm$ 0.256 & 26.087 $\pm$ 1.418 & 23.462 $\pm$ 0.236 \\
 & Cov. & 0.897 $\pm$ 0.008 & 0.886 $\pm$ 0.019 & 0.850 $\pm$ 0.009 & 0.846 $\pm$ 0.027 & 0.792 $\pm$ 0.010 \\
\midrule
RF & Size & 5.537 $\pm$ 0.426 & 5.014 $\pm$ 0.600 & 4.226 $\pm$ 0.348 & 3.996 $\pm$ 0.663 & 2.978 $\pm$ 0.282 \\
 & Cov. & 0.898 $\pm$ 0.008 & 0.877 $\pm$ 0.018 & 0.851 $\pm$ 0.008 & 0.830 $\pm$ 0.030 & 0.772 $\pm$ 0.010 \\
\midrule
Lasso & Size & 29.330 $\pm$ 0.417 & 28.492 $\pm$ 1.213 & 26.243 $\pm$ 0.254 & 26.088 $\pm$ 1.415 & 23.468 $\pm$ 0.237 \\
 & Cov. & 0.897 $\pm$ 0.008 & 0.886 $\pm$ 0.018 & 0.850 $\pm$ 0.009 & 0.846 $\pm$ 0.027 & 0.793 $\pm$ 0.010 \\
\bottomrule
\end{tabular}
\end{table}

\begin{table}[H]
\centering

\begin{tabular}{llccccc}
\toprule
Base & Metric & ECCP & ECCP($F_{\text{AoN}}$) & ECCP($F_1$) & ECCP($F_2$) & ECCP($F_3$) \\
\midrule
OLS & Size & 30.100 $\pm$ 0.528 & 30.154 $\pm$ 0.532 & 39.431 $\pm$ 0.335 & 70.983 $\pm$ 0.667 & 70.004 $\pm$ 0.688 \\
 & Cov. & 0.897 $\pm$ 0.008 & 0.898 $\pm$ 0.008 & 0.910 $\pm$ 0.005 & 0.900 $\pm$ 0.005 & 0.899 $\pm$ 0.005 \\
\midrule
RF & Size & 5.627 $\pm$ 0.422 & 5.648 $\pm$ 0.423 & 13.145 $\pm$ 0.748 & 59.823 $\pm$ 0.853 & 59.351 $\pm$ 0.880 \\
 & Cov. & 0.890 $\pm$ 0.009 & 0.891 $\pm$ 0.009 & 0.898 $\pm$ 0.007 & 0.885 $\pm$ 0.007 & 0.884 $\pm$ 0.006 \\
\midrule
Lasso & Size & 30.093 $\pm$ 0.524 & 30.147 $\pm$ 0.527 & 39.438 $\pm$ 0.338 & 70.983 $\pm$ 0.667 & 70.006 $\pm$ 0.688 \\
 & Cov. & 0.898 $\pm$ 0.008 & 0.898 $\pm$ 0.008 & 0.910 $\pm$ 0.005 & 0.900 $\pm$ 0.005 & 0.899 $\pm$ 0.005 \\
\bottomrule
\end{tabular}
\end{table}

Overall, among the $1-2\alpha$ methods, our $\mathrm{ECCP}$ is the most efficient, consistently outperforming the competing baselines. For the $1-\alpha$ methods, we observe the same pattern as before: $\mathrm{ECCP}$ comes with a large-sample performance guarantee of $F_{\mathrm{AoN}}$ and remains substantially more efficient than alternatives based on set-inflating calibrators, such as $F_1(p)=-\log p$, $F_2(p)=p^{-1/2}-1$, and $F_3(p)=2(1-p)$.

\section{Details on the P2E calibrator}
\subsection{The $\alpha(n+1)\notin \mathbb{N}$ condition}

Our framework is based on the assumption \(\alpha(n+1)\notin\mathbb N\), which makes our main theorem valid. This assumption is only a mild technical condition. It is used to ensure that the interval
$\left(\alpha,\frac{\lceil \alpha(n+1)\rceil}{n+1}\right)$
is nonempty, so that the parameter \(s\) in the P2E calibrator can be chosen strictly larger than \(\alpha\).  For example, for $\alpha=0.1$, it occurs for \(n=9,19,29,\ldots\). Additionally, if $\alpha$ is irrational, then this case never happens, although in practice we usually consider $\alpha$ to be $0.1, 0.05, 0.025$, etc.  Since this case occurs for a condition only dependent on the calibration size and the miscoverage level, it can be avoided simply, by removing or adding one calibration point.  Indeed, if \(\alpha(n+1)=k\in\mathbb N\), then $\alpha n=k-\alpha\notin\mathbb{N}$ and $\alpha(n+2) = k+\alpha\notin\mathbb{N}$ for any $\alpha\in (0,1).$ 
\subsection{Sensitivity to the choice of \(s\).}
We study the effect of the tuning parameter
$s\in\left(\alpha,\frac{\lceil \alpha(n+1)\rceil}{n+1}\right)$
on the performance of our methods: 
\[
s_w
=
w\alpha
+
(1-w)
\frac{\lceil \alpha(n+1)\rceil}{n+1}\qquad \text{for \(w\in(0,1)\)}.
\]
and report in the tables below the average prediction set length and empirical coverage vary for different values of $w$ for the three base predictors we consider. 

We observe that the performance varies only slightly across the different values of \(w\), and hence across the corresponding choices of
$s_w
=
w\alpha
+
(1-w)\frac{\lceil \alpha(n+1)\rceil}{n+1}
\in
\left(\alpha,\frac{\lceil \alpha(n+1)\rceil}{n+1}\right).$
This suggests that the tuning parameter \(s\) has little practical impact on the transformed e-variable produced by our P2E calibrator, and consequently on both the efficiency and the coverage. This stability can also be understood from the sigmoid form of the calibrator in \eqref{final_evalue}: geometrically, small changes in the location parameter \(s\) induce only mild changes in the shape of the map $p\to F_{n,\alpha}(p)$.

\begin{table}[H]
\centering
\caption{Mean length and coverage for OLS across 30 seeds for different weights, Boston dataset, $\alpha=0.1, K=5$.}
\label{tab:ols_weights}
\resizebox{\textwidth}{!}{
\begin{tabular}{lcccccc}
\toprule
$[w,1-w]$ & \multicolumn{2}{c}{ECCP} & \multicolumn{2}{c}{ECCP-exch} & \multicolumn{2}{c}{UR-ECCP-exch} \\
\cmidrule(lr){2-3} \cmidrule(lr){4-5} \cmidrule(lr){6-7}
 & Len. & Cov. & Len. & Cov. & Len. & Cov. \\
\midrule
$[0.05, 0.95]$ & 14.351 $\pm$ 0.564 & 0.895 $\pm$ 0.026 & 14.898 $\pm$ 2.172 & 0.900 $\pm$ 0.050 & 14.811 $\pm$ 2.123 & 0.898 $\pm$ 0.051 \\
$[0.1, 0.9]$ & 14.351 $\pm$ 0.564 & 0.895 $\pm$ 0.026 & 14.898 $\pm$ 2.172 & 0.900 $\pm$ 0.050 & 14.811 $\pm$ 2.123 & 0.898 $\pm$ 0.051 \\
$[0.2, 0.8]$ & 14.351 $\pm$ 0.565 & 0.895 $\pm$ 0.026 & 14.898 $\pm$ 2.172 & 0.900 $\pm$ 0.050 & 14.812 $\pm$ 2.124 & 0.898 $\pm$ 0.051 \\
$[0.3, 0.7]$ & 14.359 $\pm$ 0.570 & 0.895 $\pm$ 0.026 & 14.898 $\pm$ 2.172 & 0.900 $\pm$ 0.050 & 14.831 $\pm$ 2.134 & 0.898 $\pm$ 0.051 \\
$[0.4, 0.6]$ & 14.376 $\pm$ 0.571 & 0.894 $\pm$ 0.025 & 14.898 $\pm$ 2.172 & 0.900 $\pm$ 0.050 & 14.863 $\pm$ 2.153 & 0.900 $\pm$ 0.050 \\
$[0.5, 0.5]$ & 14.384 $\pm$ 0.573 & 0.895 $\pm$ 0.025 & 14.898 $\pm$ 2.172 & 0.900 $\pm$ 0.050 & 14.888 $\pm$ 2.170 & 0.900 $\pm$ 0.050 \\
$[0.6, 0.4]$ & 14.389 $\pm$ 0.574 & 0.895 $\pm$ 0.025 & 14.898 $\pm$ 2.172 & 0.900 $\pm$ 0.050 & 14.897 $\pm$ 2.172 & 0.900 $\pm$ 0.050 \\
$[0.7, 0.3]$ & 14.389 $\pm$ 0.574 & 0.895 $\pm$ 0.025 & 14.898 $\pm$ 2.172 & 0.900 $\pm$ 0.050 & 14.898 $\pm$ 2.172 & 0.900 $\pm$ 0.050 \\
$[0.8, 0.2]$ & 14.388 $\pm$ 0.575 & 0.895 $\pm$ 0.025 & 14.898 $\pm$ 2.172 & 0.900 $\pm$ 0.050 & 14.898 $\pm$ 2.172 & 0.900 $\pm$ 0.050 \\
$[0.9, 0.1]$ & 14.388 $\pm$ 0.574 & 0.895 $\pm$ 0.025 & 14.898 $\pm$ 2.172 & 0.900 $\pm$ 0.050 & 14.898 $\pm$ 2.172 & 0.900 $\pm$ 0.050 \\
\bottomrule
\end{tabular}}
\end{table}

\begin{table}[H]
\centering
\caption{Mean length and coverage for RF across 30 seeds for different weights, Boston dataset, $\alpha=0.1, K=5$.}
\label{tab:rf_weights}
\resizebox{\textwidth}{!}{
\begin{tabular}{lcccccc}
\toprule
$[w,1-w]$ & \multicolumn{2}{c}{ECCP} & \multicolumn{2}{c}{ECCP-exch} & \multicolumn{2}{c}{UR-ECCP-exch} \\
\cmidrule(lr){2-3} \cmidrule(lr){4-5} \cmidrule(lr){6-7}
 & Len. & Cov. & Len. & Cov. & Len. & Cov. \\
\midrule
$[0.05, 0.95]$ & 9.439 $\pm$ 0.659 & 0.884 $\pm$ 0.031 & 9.509 $\pm$ 1.530 & 0.886 $\pm$ 0.044 & 9.456 $\pm$ 1.502 & 0.884 $\pm$ 0.043 \\
$[0.1, 0.9]$ & 9.439 $\pm$ 0.659 & 0.884 $\pm$ 0.031 & 9.509 $\pm$ 1.530 & 0.886 $\pm$ 0.044 & 9.456 $\pm$ 1.502 & 0.884 $\pm$ 0.043 \\
$[0.2, 0.8]$ & 9.439 $\pm$ 0.659 & 0.884 $\pm$ 0.031 & 9.509 $\pm$ 1.530 & 0.886 $\pm$ 0.044 & 9.457 $\pm$ 1.502 & 0.884 $\pm$ 0.043 \\
$[0.3, 0.7]$ & 9.444 $\pm$ 0.663 & 0.884 $\pm$ 0.031 & 9.509 $\pm$ 1.530 & 0.886 $\pm$ 0.044 & 9.472 $\pm$ 1.513 & 0.885 $\pm$ 0.043 \\
$[0.4, 0.6]$ & 9.457 $\pm$ 0.669 & 0.884 $\pm$ 0.031 & 9.509 $\pm$ 1.530 & 0.886 $\pm$ 0.044 & 9.488 $\pm$ 1.520 & 0.885 $\pm$ 0.043 \\
$[0.5, 0.5]$ & 9.468 $\pm$ 0.674 & 0.885 $\pm$ 0.032 & 9.509 $\pm$ 1.530 & 0.886 $\pm$ 0.044 & 9.502 $\pm$ 1.529 & 0.886 $\pm$ 0.044 \\
$[0.6, 0.4]$ & 9.469 $\pm$ 0.676 & 0.884 $\pm$ 0.031 & 9.509 $\pm$ 1.530 & 0.886 $\pm$ 0.044 & 9.508 $\pm$ 1.530 & 0.886 $\pm$ 0.044 \\
$[0.7, 0.3]$ & 9.469 $\pm$ 0.677 & 0.884 $\pm$ 0.032 & 9.509 $\pm$ 1.530 & 0.886 $\pm$ 0.044 & 9.509 $\pm$ 1.530 & 0.886 $\pm$ 0.044 \\
$[0.8, 0.2]$ & 9.469 $\pm$ 0.677 & 0.884 $\pm$ 0.032 & 9.509 $\pm$ 1.530 & 0.886 $\pm$ 0.044 & 9.509 $\pm$ 1.530 & 0.886 $\pm$ 0.044 \\
$[0.9, 0.1]$ & 9.469 $\pm$ 0.676 & 0.884 $\pm$ 0.032 & 9.509 $\pm$ 1.530 & 0.886 $\pm$ 0.044 & 9.509 $\pm$ 1.530 & 0.886 $\pm$ 0.044 \\
\bottomrule
\end{tabular}}
\end{table}

\begin{table}[H]
\centering
\caption{Mean length and coverage for Lasso across 30 seeds for different weights, Boston dataset, $\alpha=0.1, K=5$.}
\label{tab:lasso_weights}
\resizebox{\textwidth}{!}{
\begin{tabular}{lcccccc}
\toprule
$[w,1-w]$ & \multicolumn{2}{c}{ECCP} & \multicolumn{2}{c}{ECCP-exch} & \multicolumn{2}{c}{UR-ECCP-exch} \\
\cmidrule(lr){2-3} \cmidrule(lr){4-5} \cmidrule(lr){6-7}
 & Len. & Cov. & Len. & Cov. & Len. & Cov. \\
\midrule
$[0.05, 0.95]$ & 14.393 $\pm$ 0.576 & 0.896 $\pm$ 0.026 & 14.887 $\pm$ 2.108 & 0.903 $\pm$ 0.047 & 14.803 $\pm$ 2.078 & 0.901 $\pm$ 0.051 \\
$[0.1, 0.9]$ & 14.393 $\pm$ 0.576 & 0.896 $\pm$ 0.026 & 14.887 $\pm$ 2.108 & 0.903 $\pm$ 0.047 & 14.803 $\pm$ 2.078 & 0.901 $\pm$ 0.051 \\
$[0.2, 0.8]$ & 14.394 $\pm$ 0.577 & 0.896 $\pm$ 0.026 & 14.887 $\pm$ 2.108 & 0.903 $\pm$ 0.047 & 14.803 $\pm$ 2.079 & 0.901 $\pm$ 0.051 \\
$[0.3, 0.7]$ & 14.405 $\pm$ 0.580 & 0.896 $\pm$ 0.025 & 14.887 $\pm$ 2.108 & 0.903 $\pm$ 0.047 & 14.820 $\pm$ 2.084 & 0.901 $\pm$ 0.051 \\
$[0.4, 0.6]$ & 14.423 $\pm$ 0.582 & 0.896 $\pm$ 0.025 & 14.887 $\pm$ 2.108 & 0.903 $\pm$ 0.047 & 14.854 $\pm$ 2.098 & 0.902 $\pm$ 0.048 \\
$[0.5, 0.5]$ & 14.433 $\pm$ 0.582 & 0.896 $\pm$ 0.026 & 14.887 $\pm$ 2.108 & 0.903 $\pm$ 0.047 & 14.877 $\pm$ 2.110 & 0.903 $\pm$ 0.049 \\
$[0.6, 0.4]$ & 14.437 $\pm$ 0.582 & 0.896 $\pm$ 0.026 & 14.887 $\pm$ 2.108 & 0.903 $\pm$ 0.047 & 14.886 $\pm$ 2.109 & 0.903 $\pm$ 0.047 \\
$[0.7, 0.3]$ & 14.437 $\pm$ 0.582 & 0.896 $\pm$ 0.026 & 14.887 $\pm$ 2.108 & 0.903 $\pm$ 0.047 & 14.887 $\pm$ 2.109 & 0.903 $\pm$ 0.047 \\
$[0.8, 0.2]$ & 14.436 $\pm$ 0.583 & 0.896 $\pm$ 0.026 & 14.887 $\pm$ 2.108 & 0.903 $\pm$ 0.047 & 14.887 $\pm$ 2.108 & 0.903 $\pm$ 0.047 \\
$[0.9, 0.1]$ & 14.436 $\pm$ 0.583 & 0.896 $\pm$ 0.026 & 14.887 $\pm$ 2.108 & 0.903 $\pm$ 0.047 & 14.887 $\pm$ 2.108 & 0.903 $\pm$ 0.047 \\
\bottomrule
\end{tabular}}
\end{table}

\section{CA Experiments \& Details}
\label{app:agg_random_splits}

\paragraph{Experimental setup.}
For each dataset and random seed, we split the data into $50\%$ training, $35\%$ calibration, and $15\%$ test sets. 
A collection of $K=7$ regression models $\{\widehat{\mu}_k\}_{k=1}^K$ is trained on the same training set: Linear Regression, regularized Lasso, iterative SGDRegressor, a probabilistic model (Bayesian Ridge) and tree-based ensembles (Random Forest and HistGradientBoosting), and a neural network (MLPRegressor). 
To study heterogeneous calibration sizes, each model $k$ is assigned its own calibration subset 
$S_{\mathrm{cal}}^{(k)} \subseteq S_{\mathrm{cal}}$, sampled uniformly from the common calibration pool, with a random size between $60\%$ and $100\%$ of $|S_{\mathrm{cal}}|$. 
For model $k$, conformity scores are computed as $R_j^{(k)} = |Y_j - \widehat{\mu}_k(X_j)|, 
(X_j,Y_j) \in S_{\mathrm{cal}}^{(k)}.$
Given a test point $x$ and candidate response $y$, the model-specific conformal p-value is
\[
p_k(y) =
\frac{1 + \sum_{j \in S_{\mathrm{cal}}^{(k)}} 
\mathbf{1}\{R_j^{(k)} \ge |y-\widehat{\mu}_k(x)|\}}
{|S_{\mathrm{cal}}^{(k)}|+1},
\]

which is calibrated into an e-variable using the calibrators we consider in section \ref{sec:experiments}.
All methods are evaluated at nominal miscoverage level $\alpha=0.05$ over multiple random seeds. 
Coverage is computed on the test responses, while average prediction-set length is approximated on a finite grid over a data-adaptive interval around the model predictions.

\subsection{Theoretical Validity of WECA}

\label{app:weca_validity}

Although, by design, all of our e-value-based methods satisfy the coverage guarantee, we explain here how we implement our WECA method in order for the coverage guarantee to be preserved. The core idea is to choose weights that are independent of a calibration data split, so that the aggregated e-variable
\[
E^\omega := \sum_{k=1}^K \omega_k E_k
\]
yields prediction sets with small average length.

Consider $K$ predictors indexed by $k\in[K]$. Each predictor $k$ may have its own calibration index set, denoted by $I_{\mathrm{cal}}^{(k)}$. For a set of calibration indices $J_k\subseteq I_{\mathrm{cal}}^{(k)}$, let
$P_k^{J_k}(x,y)$
denote the conformal p-variable constructed from predictor $k$ using only the calibration data indexed by $J_k$, evaluated at test input $x$ and candidate label $y\in\mathcal Y$. Its calibrated e-variable is
\[
E_k^{J_k}(x,y)
:=
F_{|J_k|,\alpha}\big(P_k^{J_k}(x,y)\big),
\]
where $F_{|J_k|,\alpha}$ is the P2E calibrator corresponding to calibration size $|J_k|$ and miscoverage level $\alpha$.

Denote the space of weight vectors $\omega=(\omega_1,\dots,\omega_K)$ by the simplex
\[
\Delta_K
:=
\left\{
\omega\in\mathbb R_+^K:
\sum_{k=1}^K \omega_k=1
\right\}.
\]
For any $\omega\in\Delta_K$ and any vector of calibration index sets
$\mathbf J=(J_1,\ldots,J_K)$, define the weighted e-variable
\[
E_\omega^{\mathbf J}(x,y)
:=
\sum_{k=1}^K \omega_k E_k^{J_k}(x,y),
\]
and the corresponding prediction set
\[
\mathcal C_{\alpha}^{\omega,\mathbf J}(x)
:=
\left\{
y\in\mathcal Y:
E_\omega^{\mathbf J}(x,y)<\frac1\alpha
\right\}.
\]

We split the calibration data into a \emph{tuning} part, used only to select the weights, and an \emph{inference} part, used only to produce the final conformal set:
\[
I_{\mathrm{cal}}
=
I_{\mathrm{tune}}\sqcup I_{\mathrm{inf}},
\qquad
|I_{\mathrm{tune}}|=n_{\mathrm{tune}},
\qquad
|I_{\mathrm{inf}}|=n_{\mathrm{inf}}.
\]
We then split the tuning indices again:
\[
I_{\mathrm{tune}}
=
I_{\mathrm{tune},1}
\sqcup
I_{\mathrm{tune},2},
\qquad
|I_{\mathrm{tune},1}|=n_{\mathrm{tune},1},
\qquad
|I_{\mathrm{tune},2}|=n_{\mathrm{tune},2}.
\]

Since each model may have its own calibration set, we define the model-specific splits
\[
J_{k,\mathrm{tune},1}
:=
I_{\mathrm{cal}}^{(k)}\cap I_{\mathrm{tune},1},
\qquad
J_{k,\mathrm{inf}}
:=
I_{\mathrm{cal}}^{(k)}\cap I_{\mathrm{inf}},
\]
with sizes
\[
n_{k,\mathrm{tune},1}
:=
|J_{k,\mathrm{tune},1}|,
\qquad
n_{k,\mathrm{inf}}
:=
|J_{k,\mathrm{inf}}|.
\]
We also write
\[
\mathbf J_{\mathrm{tune},1}
:=
(J_{1,\mathrm{tune},1},\ldots,J_{K,\mathrm{tune},1}),
\qquad
\mathbf J_{\mathrm{inf}}
:=
(J_{1,\mathrm{inf}},\ldots,J_{K,\mathrm{inf}}).
\]

The first tuning split $I_{\mathrm{tune},1}$ is used as a calibration set to construct the e-variables, while the second tuning split $I_{\mathrm{tune},2}$ is used only to evaluate the empirical prediction-set size. We choose
\[
\omega^\star
\in
\arg\min_{\omega\in\Delta_K}
\frac1{n_{\mathrm{tune},2}}
\sum_{i\in I_{\mathrm{tune},2}}
\left|
\mathcal C_{\alpha}^{\omega,\mathbf J_{\mathrm{tune},1}}(X_i)
\right|.
\]

After the weights have been selected, the final WECA set is constructed using only the inference calibration split:
\[
\mathcal C_{\alpha}^{\mathrm{WECA}}(X_{n+1})
:=
\left\{
y\in\mathcal Y:
E_{\mathrm{inf}}(X_{n+1},y)<\frac1\alpha
\right\},
\]
where
\[
E_{\mathrm{inf}}(X_{n+1},y)
:=
E_{\omega^\star}^{\mathbf J_{\mathrm{inf}}}(X_{n+1},y)
=
\sum_{k=1}^K
\omega_k^\star
E_k^{J_{k,\mathrm{inf}}}(X_{n+1},y),
\]
with
\[
E_k^{J_{k,\mathrm{inf}}}(X_{n+1},y)
=
F_{n_{k,\mathrm{inf}},\alpha}
\big(
P_k^{J_{k,\mathrm{inf}}}(X_{n+1},y)
\big).
\]

We now show that the final statistic is an e-variable, thanks to the independence of $\omega^*$ from the inference split and the test point:
\[
\mathbb{E}_{\mathbb{H}_0}[E_{\text{inf}}]
=
\sum_{k=1}^K
\mathbb{E}_{\mathbb{H}_0}
\left[
\omega_k^*
E_k^{J_{k,\mathrm{inf}}}(X_{n+1},Y_{n+1})
\right]
=
\sum_{k=1}^K
\mathbb{E}_{\mathbb{H}_0}[\omega_k^*]
\underbrace{
\mathbb{E}_{\mathbb{H}_0}
\left[
E_k^{J_{k,\mathrm{inf}}}(X_{n+1},Y_{n+1})
\right]
}_{=1}
=
\sum_{k=1}^K
\mathbb{E}_{\mathbb{H}_0}[\omega_k^*]
=
1 .
\]
This proves that WECA preserves the finite-sample coverage guarantee. The randomized version UR-WECA is obtained by replacing the threshold $1/\alpha$ with $U/\alpha$, where $U\sim\mathrm{Unif}(0,1)$ is independent of all data. Its validity follows analogously from the UR-MI.

\section{Discussion and Future Directions}
\label{app:future_directions}

Our methodology enables a simple and general conversion of conformal p-variables into e-variables using any P2E calibrator. 
Once e-values are obtained, our framework naturally lends itself to several extensions and applications. 
Beyond the settings discussed above, we highlight a few potential directions for future research.


\paragraph{Post-hoc selection.}
E-values naturally support post-hoc (data-adaptive) inference. 
Following \citet{koning2023post}, a nonnegative random variable \(P\) is \emph{post-hoc valid} 
if, for every (possibly data-dependent) random threshold \(\delta > 0\),
\[
\sup_{\delta > 0} 
\mathbb{E}\!\left[\frac{\mathbf{1}\{P \le \delta\}}{\delta}\right] \le 1.
\]
Koning further shows that this property holds if and only if \(1/P\) is an e-variable. 
In our notation, for fixed \(\alpha \in (0,1)\) and taking $P:=1/E_{n,\alpha}$,
the post-hoc validity condition can equivalently be written as
\[
\sup_{\delta > 0} 
\mathbb{E}\!\left[\frac{\mathbf{1}\!\left\{E_{n,\alpha} \ge 1/\delta\right\}}{\delta}\right] 
\le 1,
\]
allowing data-dependent, post-hoc choices of \(\delta\).

\paragraph{Randomized e-CP.}
As previously discussed, both p-CP and e-CP prediction sets coincide under P2E calibration:
\[
\{P_n > \alpha\} = \{E_{n,\alpha} < 1/\alpha\},
\]
thus preserving the conformal coverage guarantee. 
One can further consider a randomized variant using an independent \(U \sim \mathrm{Unif}(0,1)\), 
where a candidate is included if \(E_n < U/\alpha\). 
This rule defines a prediction set that is nested within the nonrandomized version 
and remains marginally valid by the UR-MI property:
\[
\{E_{n,\alpha} < U/\alpha\} \subset 
\{E_{n,\alpha} < 1/\alpha\} = \{P_n > \alpha\},
\]
with all satisfying the  \(1 - \alpha\) coverage guarantee. 
Exploring how such randomization affects the distribution of coverage and the efficiency 
of the resulting prediction sets represents an interesting direction for future work.

\end{document}